\def\eqref#1{eq.~(\ref{#1})}
\def\Eqref#1{Eq.~(\ref{#1})}
\def\1{\bm{1}}
\DeclareMathAlphabet{\mathsfit}{\encodingdefault}{\sfdefault}{m}{sl}
\SetMathAlphabet{\mathsfit}{bold}{\encodingdefault}{\sfdefault}{bx}{n}
\DeclareMathOperator*{\argmin}{arg\,min}
\DeclareMathOperator*{\expec}{\mathbb E}
\newcommand{\feat}{f}
\newcommand{\target}{h}
\newcommand{\critic}{g}
\newcommand{\nrm}[1]{\Vert {#1} \Vert_2^2}
\newtheorem{thm*}{Theorem}
\newtheorem{cor*}[thm*]{Corollary}
\newtheorem{prop*}[thm*]{Proposition}
\newtheorem{lem*}[thm*]{Lemma}
\newcommand{\bigCI}{\mathrel{\text{\scalebox{1.07}{$\perp\mkern-10mu\perp$}}}}
\newcommand{\relic}{\textsc{ReLIC}}
\newtheorem{theorem}{Theorem}
\newtheorem{theorem*}{Theorem}
\newtheorem{lemma}{Lemma}
\newtheorem{definition}{Definition}
\newenvironment{oneshot}[1]{\begin{theorem*}{#1}{\unskip}}{\end{theorem*}}
\title{Representation Learning via \\ Invariant Causal Mechanisms}
\author{%
  Jovana Mitrovic
  \quad Brian McWilliams
  \quad Jacob Walker
  \quad Lars Buesing
  \quad Charles Blundell \\ \\
    \hspace{5cm} DeepMind, London, UK \\
    \hspace{0.6cm} \texttt{\{mitrovic, bmcw, jcwalker, lbuesing, cblundell\}@google.com} \\
}
\begin{document}

\maketitle

\begin{abstract}
Self-supervised learning has emerged as a strategy to reduce the reliance on costly supervised signal by pretraining representations only using unlabeled data.
These methods combine heuristic proxy classification tasks with data augmentations and have achieved significant success, but our theoretical understanding of this success remains limited.
In this paper we analyze self-supervised representation learning using a causal framework.
We show how data augmentations can be more effectively utilized through explicit \emph{invariance constraints} on the proxy classifiers employed during pretraining. 
Based on this, we propose a novel self-supervised objective, 
Representation Learning via Invariant Causal Mechanisms (\relic{}), that enforces \emph{invariant prediction} of proxy targets across augmentations through an invariance regularizer
which yields improved generalization guarantees.
Further, using causality we generalize contrastive learning, a particular kind of self-supervised method, and provide an alternative theoretical explanation for the success of these methods.
Empirically, \relic{} significantly outperforms competing methods in terms of robustness and out-of-distribution generalization on ImageNet, while also significantly outperforming these methods on Atari achieving above human-level performance on $51$ out of $57$ games. 
\end{abstract}

\section{Introduction}

Training deep networks often relies heavily on large amounts of useful supervisory signal, such as labels for supervised learning or rewards for reinforcement learning. 
These training signals can be costly or otherwise impractical to acquire. 
On the other hand, unsupervised data is often abundantly available. 
Therefore, pretraining representations for unknown downstream tasks without the need for labels or extrinsic reward holds great promise for reducing the cost of applying machine learning models.
To pretrain representations, self-supervised learning makes use of proxy tasks defined on unsupervised data.
Recently, self-supervised methods using contrastive objectives have emerged as one of the most successful strategies for unsupervised representation learning \citep{oord2018representation, hjelm2018learning, chen2020simple}.
These methods learn a representation by classifying every datapoint against all others datapoints (negative examples). 
Under assumptions on how the negative examples are sampled, minimizing the resulting contrastive loss has been justified as maximizing a lower bound on the mutual information (MI) between representations \citep{poole2019variational}. 
However, \citep{tschannen2019mutual} has shown that performance on downstream tasks may be more tightly correlated with the choice of encoder architecture than the achieved MI bound, highlighting issues with the MI theory of contrastive learning. 
Further, contrastive approaches compare different views of the data (usually under different data augmentations) to calculate similarity scores.
This approach to computing scores has been empirically observed as a key success factor of contrastive methods, but has yet to be theoretically justified.
This lack of a solid theoretical explanation for the effectiveness of contrastive methods hinders their further development.

To remedy the theoretical shortcomings, we analyze the problem of self-supervised representation learning through a causal lens.
We formalize intuitions about the data generating process using a causal graph and leverage causal tools to derive properties of the optimal representation.
We show that a representation should be an \emph{invariant predictor} of proxy targets under interventions on features that are only correlated, but not causally related to the downstream targets of interest.
Since neither causally nor purely correlationally related features are observed and thus performing actual interventions on them is not feasible, for learning representation with this property we use data augmentations to simulate a subset of possible interventions.
Based on our causal interpretation, we propose a regularizer which enforces that the prediction of the proxy targets is invariant across data augmentations. 
We propose a novel objective for self-supervised representation learning called REpresentation Learning with Invariant Causal mechanisms (\relic). 
We show how this explicit invariance regularization leverages augmentations more effectively than previous self-supervised methods and that representations learned using \relic{} are guaranteed to generalize well to downstream tasks under weaker assumptions than those required by previous work \citep{saunshi2019theoretical}.

Next we generalize contrastive learning and provide an alternative theoretical explanation to MI for the success of these methods.
We generalize the proxy task of instance discrimination commonly used in contrastive learning using the causal concept of \emph{refinements} \citep{chalupka2014visual}. 
Intuitively, a refinement of a task can be understood as a more fine-grained variant of the original problem. 
For example, a refinement for classifying cats against dogs would be the task of classifying individual cat and dog breeds. 
The instance discrimination task results from the most fine-grained refinement, e.g. discriminating individual cats and dogs from one another.
We show that using refinements as proxy tasks enables us to learn useful representations for downstream tasks.
Specifically, using causal tools, we show that learning a representation on refinements such that it is an invariant predictor of proxy targets across augmentations is a \emph{sufficient condition} for these representations to generalize to downstream tasks (cf. Theorem \ref{thm.condition}). 
In summary, we provide theoretical support both for the general form of the contrastive objective as well as for the use of data augmentations. 
Thus, we provide an alternative explanation to mutual information for the success of recent contrastive approaches namely that of causal refinements of downstream tasks. 

We test \relic{} on a variety of prediction and reinforcement learning problems. 
First, we evaluate the quality of representations pretrained on ImageNet with a special focus on robustness and out-of-distribution generalization.
\relic{} performs competitively with current state-of-the-art methods on ImageNet, while significantly outperforming competing methods on robustness and out-of-distribution generalization of the learned representations when tested on corrupted ImageNet (ImageNet-C \citep{hendrycks2019robustness}) and a version of ImageNet that consist of different renditions of the same classes (ImageNet-R \citep{hendrycks2020many}).
In terms of robustness, \relic{} also significantly outperforms the supervised baseline with an absolute reduction of $4.9\%$ in error.
Unlike much prior work that specifically focuses on computer vision tasks, we test \relic{} for representation learning in the context of reinforcement learning on the Atari suite \citep{Bellemare2013TheAL}.
There we find that \relic{} significantly outperforms competing methods and achieves above human-level performance on $51$ out of $57$ games.

\paragraph{Contributions.}
\begin{itemize}
    \item We formalize problem of self-supervised representation learning using causality and propose to more effectively leverage data augmentations through invariant prediction.
    
    \item We propose a new self-supervised objective, REpresentation Learning with Invariance Causal mechanisms (\relic), that enforces invariant prediction through an explicit regularizer and show improved generalization guarantees. 

    \item We generalize contrastive learning using refinements and show that learning on refinements is a sufficient condition for learning useful representations; this provides an alternative explanation to MI for the success of contrastive methods.
    
\end{itemize}

\section{Representation Learning via Invariant Causal Mechanisms} \label{sec:framework}

\paragraph{Problem setting.} Let $X$ denote the unlabelled observed data and $\mathcal{Y} = \{Y_t\}_{t=1}^{T}$ be a set of unknown tasks with $Y_t$ denoting the targets for task $t$. 
The tasks $\{Y_t\}_{t=1}^{T}$ can represent both a multi-environment as well as a multi-task setup.
Our goal is to pretrain with unsupervised data a representation $f(X)$ that will be useful for solving the downstream tasks $\mathcal{Y}$.

\paragraph{Causal interpretation.} To effectively leverage common assumptions and intuitions about data generation of the unknown downstream tasks for the learning algorithm, we propose to formalize them using a causal graph. 
We start from the following assumptions: a) the data is generated from \emph{content} and \emph{style} variables, with b) only content (and not style) being relevant for the unknown downstream tasks and c) content and style are independent, i.e. style changes are content-preserving.
For example, when classifying dogs against giraffes from images, different parts of the animals constitute content, while style could be, for example, background, lighting conditions and camera lens characteristics.
By assumption, content is a good representation of the data for downstream tasks and we therefore cast the goal of representation learning as estimating content.
In the following, we compactly formalize these assumptions with a causal graph\footnote{See \citep{peters2017elements} for a review of causal graphs and causality.}, see Figure \ref{fig.cgraph}.

Let $C$ and $S$ be the latent variables describing content and style. 
In Figure \ref{fig.cgraph}, the directed arrows from $C$ and $S$ to the observed data $X$ (e.g.\ images) indicate that $X$ is generated based on content and style. 
The directed arrow from $C$ to the target $Y_{t}$ (e.g.\ class labels) encodes the assumption that content directly influences the target tasks, while the absence of any directed arrow from $S$ to $Y_{t}$ indicates that style does not. 
Thus, content $C$ has all the necessary information to predict $Y_{t}$.
The absence of any directed path between $C$ and $S$ in Figure \ref{fig.cgraph} encodes the intuition that these variables are independent, i.e. $C \bigCI S$.

\begin{figure}[t]
\begin{subfigure}[b]{.40\textwidth}
\begin{tikzpicture}[scale=1.25, >=stealth]
\input{tikz_style.tex}
    \node[lat] at (0,0) (style) {$S$};
    \node[lat] at (1,0) (content) {$C$};
    \node[lat] at (2.5,0.5) (label_1)  {$Y_{1}$};
    \node[lat] at (2.5,-0.5) (label_T)  {$Y_{T}$};
    \node[detobs] at (0.5,-1.25) (pixels)  {$X$};
    \node[det] at (0.5,-2.2) (rep) {$f(X)$};
    \node[det] at (3.2,-2.2) (refinement) {$Y^R$};
    \node[draw,inner sep=1mm,label=above:Data generation,fit=(style) (pixels) (label_1) (label_T)] {};
    \node[draw,inner sep=1mm,label=below:Representation Learning,fit=(pixels) (rep) (refinement)] {};
    \path   (style) edge [directed] (pixels)
            (content) edge [directed] (pixels)
            (content) edge [directed] (label_1)
            (content) edge [directed] (label_T)
            (label_1) -- node [midway, sloped] {$\dots$} (label_T)
            (pixels) edge [directed, dashed] (rep)
            (rep) edge [directed, dashed] (refinement)
            (label_1) edge [bend left, directed] (refinement)
            (label_T) edge [bend left, directed] (refinement);
\end{tikzpicture}%
\caption{\label{fig.cgraph}}
\end{subfigure}
\begin{subfigure}[b]{.60\textwidth}
\includegraphics[width=1\textwidth]{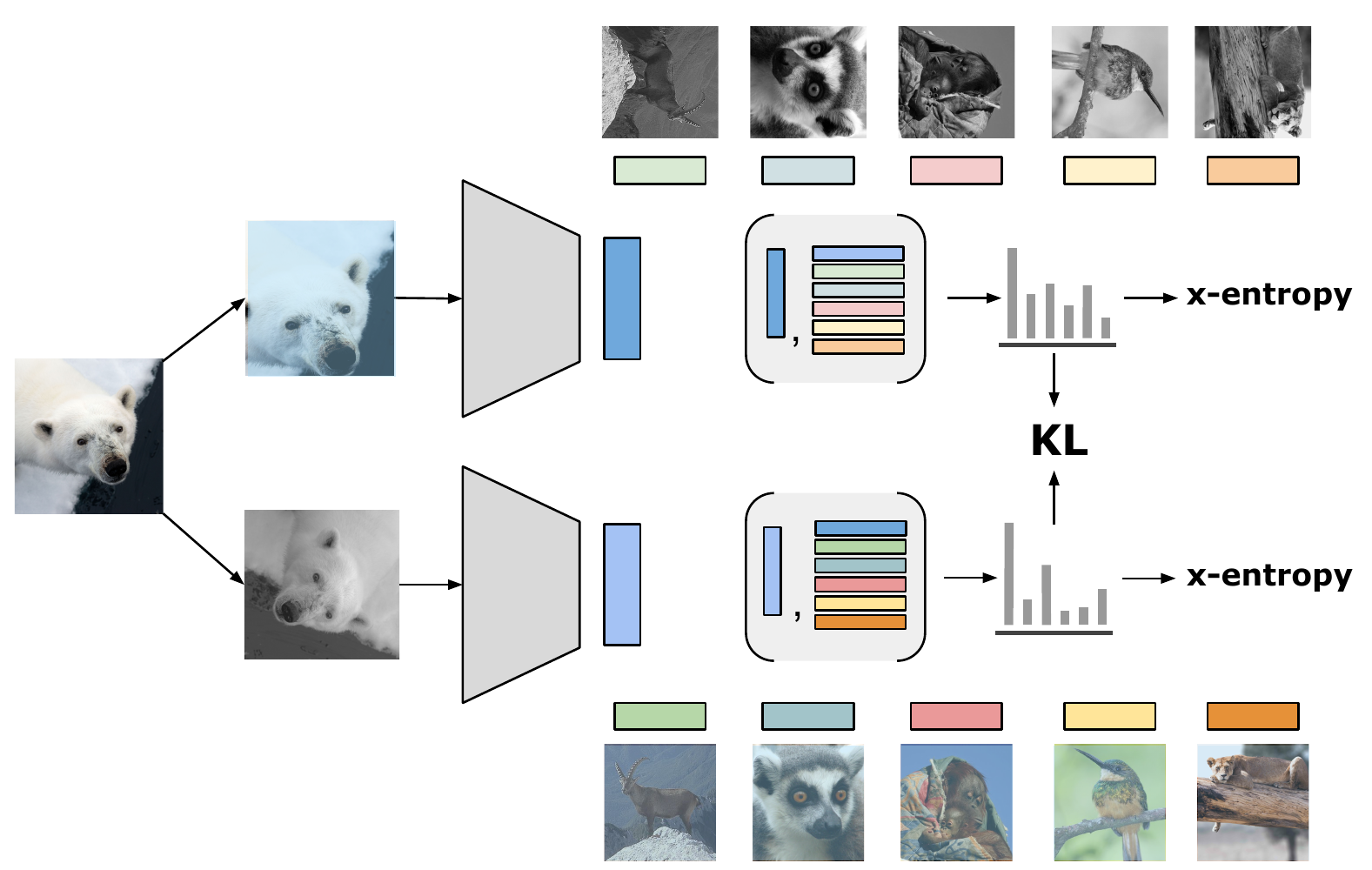}
\caption{\label{fig.objective}}
\end{subfigure}
\caption{\textbf{(a)} Causal graph formalizing assumptions about content and style of the data and the relationship between targets and proxy tasks. \textbf{(b)} \relic{} objective. KL refers to the Kullback-Leibler divergence, while x-entropy denotes cross entropy.}
\end{figure}

Using the independence of mechanisms \citep{peters2017elements}, we can conclude that under this causal model performing interventions on $S$ does not change the conditional distribution $P(Y_t\vert C)$, 
i.e. manipulating the value of $S$ does not influence this conditional distribution.
Thus, $P(Y_t\vert C)$ is invariant under changes in style $S$.
We call $C$ an \emph{invariant representation} for $Y_{t}$ under $S$, i.e.
\begin{equation}
    p^{do(S = s_{i})}(Y_{t}\,\vert\,C) = p^{do(S = s_{j})}(Y_{t}\,\vert\,C) \quad \forall \; s_{i}, s_{j}\in\mathcal{S}, \label{eq.inv}
\end{equation}
where $p^{do(S=s)}$ denotes the distribution arising from assigning $S$ the value $s$ with $\mathcal{S}$ the domain of $S$ \citep{pearl2009causality}.
Specifically, using $C$ as a representation allows for us to predict targets stably across perturbations, i.e. content $C$ is both a useful and robust representation for tasks $\mathcal{Y}$.

Since the targets $Y_{t}$ are unknown, we will construct a proxy task $Y^{R}$ in order to learn representations from unlabeled data $X$ only.
In order to learn useful representations for $Y_{t}$, we will construct proxy tasks that represents more fine-grained problems that $Y_{t}$; for a more formal treatment of proxy tasks please refer to Section \ref{sec:method}.
Further, to learn invariant representations, such as $C$, we enforce Equation \ref{eq.inv} which requires us to observe data under different style interventions, i.e. we need data that describes the same content under varying style. 
Since we do not have access to $S$, to simulate style variability we use content-preserving data augmentations  (e.g.\ rotation, grayscaling, translation, cropping \ for images).
Specifically, we utilize \emph{data augmentations as interventions on the style variable $S$}, i.e. applying data augmentation $a_{i}$ corresponds to intervening on $S$ and setting it to $s_{a_{i}}$. \footnote{Since neither content nor style are a priori known, choosing a set of augmentations implicitly defines which aspects of the data are considered style and which are content.}
Although we are not able to generate all possible styles using a fixed set of data augmentations, we will use augmentations that generate large sets of diverse styles as this allows us to learn better representations. 
Note that the heuristic of estimating similarity based on different views from contrastive learning can be interpreted as an implicit invariance constraint.

\paragraph{\relic{} objective.} 
Equation \ref{eq.inv} provides a general scheme to estimate content (c.f.\ Figure \ref{fig.cgraph}). 
We operationalize this by proposing to learn representations such that prediction of proxy targets from the representation is invariant under data augmentations.
The representation $\feat(X)$ must fulfill the following \emph{invariant prediction} criteria
\begin{equation}
    \text{(\emph{Invariant prediction})} \quad\quad p^{\text do(a_{i})}(Y^{R} \vert f(X)) = p^{\text do(a_{j})}(Y^{R} \vert f(X)) \quad \forall a_{i}, a_{j} \in \mathcal{A}.
\end{equation}
$\mathcal{A}=\{a_{1}, \dots, a_{m}\}$ is the set of data augmentations which \emph{simulate} interventions on the style variables and $p^{do(a)}$ denotes $p^{do(S = s_{a})}$.

To achieve invariant prediction, we propose to explicitly enforce invariance under augmentations through a regularizer.
This gives rise to an objective for self-supervised learning we call Representation Learning via Invariant Causal Mechanisms (\relic).
We write this objective as
\begin{flalign*}
    \expec_{X} \expec_{\substack{a_{lk}, a_{qt} \\ \sim \mathcal{A}\times\mathcal{A}}} ~~ \sum_{b\in \{a_{lk}, a_{qt}\}} \mathcal{L}_{b}(Y^R, f(X))
     ~~~ s.t. ~~~ KL\left(p^{do(a_{lk})}(Y^{R} \vert\, \feat(X)), p^{do(a_{qt})}(Y^{R} \vert\, \feat(X))\right) \leq \rho   
\end{flalign*}
where $\mathcal{L}$ is the proxy task loss and $KL$ is the Kullback-Leibler (KL) divergence.
Note that any distance measure on distributions can be used in place of the KL divergence. We explain the remaining terms in detail below.

Concretely, as proxy task we associate to every datapoint $x_{i}$ the label $y^{R}_i=i$. 
This corresponds to the instance discrimination task, commonly used in contrastive learning \citep{hadsell2006dimensionality}.
We take pairs of points $(x_{i}, x_{j})$ to compute similarity scores and use pairs of augmentations $a_{lk}=(a_{l}, a_{k})\in \mathcal{A}\times\mathcal{A}$ to perform a style intervention.
Given a batch of samples $\{x_i\}_{i=1}^N \sim \mathcal{D}$, we use
\begin{equation*}
  p^{do(a_{lk})}(Y^{R} = j \,\vert \, \feat(x_{i})) \propto \exp\left(\phi(\feat(x_{i}^{a_{l}}),\target(x_{j}^{a_{k}})) / \tau \right).
\label{eq.prob_model_intervention}
\end{equation*}
with $x^{a}$ data augmented with $a$ and $\tau$ a softmax temperature parameter. 
We  encode $\feat$ using a neural network and choose $\target$ to be related to $\feat$, e.g. $\target=\feat$ or as a network with an exponential moving average of the weights of $\feat$ (e.g.\ target networks similar to \citep{grill2020bootstrap}). 
To compare representations we use the function $\phi(\feat(x_{i}),\target(x_{j}))=\langle \critic(\feat(x_i)), \critic(\target(x_j)) \rangle$ where $\critic$ is a fully-connected neural network often called the critic. 

Combining these pieces, we learn representations by minimizing the following objective over the full set of data $x_{i}\in\mathcal{D}$ and augmentations $a_{lk}\in\mathcal{A}\times\mathcal{A}$
\begin{align}
   -\sum_{i=1}^{N}\sum_{a_{lk}} \log \frac{
    \exp\left(\phi(\feat(x_i^{a_l}), \target(x_i^{a_k}))/ \tau \right)
    }
    {
    \sum_{m=1}^M \exp\left( \phi(\feat(x_i^{a_l}), \target(x_m^{a_k}) ) / \tau\right) 
    } 
    + \alpha\sum_{a_{lk}, a_{qt}} KL(p^{do(a_{lk})}, p^{do(a_{qt})})
    \label{eq:loss}
\end{align}
with $M$ the number of points we use to construct the contrast set and $\alpha$ the weighting of the invariance penalty.
We used the shorthand $p^{do(a)}$ for $p^{do(a)}(Y^{R} = j \,\vert \, \feat(x_{i}))$.
With appropriate choices for $\phi$, $g$, $f$ and $h$ above, Equation \ref{eq:loss} recovers many recent state-of-the-art methods (c.f.\ Table \ref{tab:methods} in Section \ref{sec:relationship}). Figure~\ref{fig.objective} presents a schematic of the \relic{} objective. 

The explicit invariance penalty encourages the within-class distances (for a downstream task of interest) of the representations learned by \relic{} to be tightly concentrated. We show this empirically in Figure~\ref{fig:distances} and theoretically in Appendix~\ref{sec:gen-theory_main}. 
In the following section we provide theoretical justification for using an instance discrimination-based contrastive loss using a causal perspective. 
We also show (cf. Theorem \ref{thm.condition} below) that minimizing the contrastive loss alone (i.e. $\alpha=0$) does not guarantee generalization.
Instead, invariance across augmentations must be explicitly enforced. 

\section{Generalizing Contrastive Learning}
\label{sec:method}

\paragraph{Learning with refinements.} \label{sec:refinements}
In contrastive learning, the task of instance discrimination, i.e. classifying the dataset $\{(x_i, y_i^{R} = i) \vert x_{i}\in \mathcal{D}\}$, is used as the proxy task.
To better understand contrastive learning and motivate this proxy task, we generalize instance discrimination using the causal concept of \emph{refinements} \citep{chalupka2014visual}.
Intuitively, a refinement of one problem is another more fine-grained problem.
If task $Y_{t}$ is to classify cats against dogs, then a refinement of $Y_{t}$ is the task of classifying cats and dogs into their individual breeds.
See Figure \ref{fig:refinement} for a further visual example. 
 For any set of tasks, there exist many different refinements. 
However, the most fine-grained refinement corresponds exactly to classifying the dataset $\{(x_i, y_i^{R} = i) \vert x_{i}\in \mathcal{D}\}$.
Thus, the instance discrimination task used in contrastive learning is a specific type of refinement.
For a definition and formal treatment of refinements please refer to Appendix~\ref{sec:refinements_app}.

Let $Y^{R}$ be targets of a proxy task that is a refinement for all tasks in $\mathcal{Y}$.
Leveraging causal tools, we connect learning on refinements to learning on downstream tasks.
Specifically, we provide a theoretical justification for exchanging unknown downstream tasks with these specially constructed proxy tasks. 
We show that if $\feat(X)$ is an invariant representation for $Y^{R}$ under changes in style $S$, then $\feat(X)$ is also an invariant representation for tasks in $\mathcal{Y}$ under changes in style $S$.
Thus by enforcing invariance under style interventions on a refinement, we learn representations that generalize to downstream tasks.\footnote{Note that since refinements are more fine-grained that the original task, if a representation captures a refinement then it also captures the downstream tasks as strictly more information is needed to solve the refinement.}
This is summarized in the following theorem. 
\begin{theorem}
    Let $\mathcal{Y}=\{Y_t\}_{t=1}^{T}$ be a family of downstream tasks. Let $Y^{R}$ be a refinement for all tasks in $\mathcal{Y}$. 
    If $\feat(X)$ is an invariant representation for $Y^{R}$ under style interventions $S$, then $\feat(X)$ is an invariant representation for all tasks in $\mathcal{Y}$ under style interventions $S$, i.e.
    \begin{equation}
    \label{eq:condition}
    p^{do(s_{i})}(Y^{R}\,\vert\,\feat(X)) = p^{do(s_{j})}(Y^{R}\,\vert\,\feat(X)) \quad 
    \Rightarrow \quad 
    p^{do(s_{i})}(Y_{t}\,\vert\,\feat(X)) = p^{do(s_{j})}(Y_{t}\,\vert\,\feat(X))
    \end{equation}
for all $s_{i}, s_{j}\in\mathcal{S}$ with $p^{do(s_{i})}=p^{do(S = s_{i})}$. Thus, $\feat(X)$ is a representation that generalizes to $\mathcal{Y}$.
\label{thm.condition}
\end{theorem}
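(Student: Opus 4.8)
The plan is to exploit the single structural fact that makes $Y^{R}$ a refinement: each value of $Y^{R}$ determines the value of every downstream task. Concretely, from the definition of refinement (see Appendix~\ref{sec:refinements_app}) there is, for each $t$, a deterministic measurable map $\pi_t$ with $Y_t = \pi_t(Y^{R})$; equivalently, the partition of the sample space induced by $Y^{R}$ subdivides the one induced by $Y_t$, so the event $\{Y_t = y\}$ is the disjoint union $\bigcup_{y^{R}\in\pi_t^{-1}(y)}\{Y^{R}=y^{R}\}$. The whole argument is then a marginalization of the invariance hypothesis over the fibers $\pi_t^{-1}(y)$.

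First I would argue that this deterministic link is preserved under the interventions in the statement. Intervening with $do(S=s)$ truncates the edges into $S$ but leaves the mechanism $C \to Y_t$ and the refinement relation between $Y^{R}$ and $(Y_1,\dots,Y_T)$ intact; and since $C \bigCI S$, the intervention does not alter the law of $C$ either. Hence $Y_t = \pi_t(Y^{R})$ continues to hold jointly with $\feat(X)$ under every $p^{do(s)}$, and so does the disjoint-union decomposition of $\{Y_t = y\}$. This is the step I expect to be the crux: pinning down, from the causal semantics of Figure~\ref{fig.cgraph}, that none of the directed paths cut or opened by $do(S=s)$ can break the refinement relation.

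Given that, for fixed $s_i,s_j\in\mathcal{S}$, a value $y$ of $Y_t$, and a value of $\feat(X)$, I would write
\begin{equation*}
p^{do(s_i)}(Y_t = y \,\vert\, \feat(X)) \;=\; \sum_{y^{R}\in\pi_t^{-1}(y)} p^{do(s_i)}(Y^{R}=y^{R}\,\vert\,\feat(X)),
\end{equation*}
apply the hypothesis $p^{do(s_i)}(Y^{R}=y^{R}\,\vert\,\feat(X)) = p^{do(s_j)}(Y^{R}=y^{R}\,\vert\,\feat(X))$ term by term, and re-sum to get $p^{do(s_i)}(Y_t = y \,\vert\, \feat(X)) = p^{do(s_j)}(Y_t = y \,\vert\, \feat(X))$. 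Since $t$, $y$ and $s_i,s_j$ were arbitrary, this is exactly the assertion that $\feat(X)$ is an invariant representation for all tasks in $\mathcal{Y}$, and the final clause of the theorem (generalization to $\mathcal{Y}$) follows from the interpretation of invariant representations developed around Equation~\ref{eq.inv}.

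For continuous $Y^{R}$ the sum is replaced by an integral of the interventional conditional density over the fiber $\pi_t^{-1}(y)$, which requires $\pi_t$ to be measurable — part of the refinement definition — but is otherwise a routine measure-theoretic upgrade. Everything after the second paragraph is just bookkeeping with the law of total probability, so the only genuinely non-mechanical point is the invariance of the $Y^{R}\!\to Y_t$ dependence under style interventions.
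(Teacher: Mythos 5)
Your proposal is correct and follows essentially the same route as the paper's proof: the paper also decomposes $p^{do(s_i)}(Y_t\,\vert\,\feat(X))=\int p^{do(s_i)}(Y_t\,\vert\,Y^{R})\,p^{do(s_i)}(Y^{R}\,\vert\,\feat(X))\,dY^{R}$, uses the invariance of the $Y_t\,\vert\,Y^{R}$ mechanism under style interventions (your ``crux''), and then applies the hypothesis to the second factor. Your fiber-sum formulation is just the special case where the kernel $p(Y_t\,\vert\,Y^{R})$ is the indicator of the deterministic refinement map $\pi_t$, and it has the minor virtue of making explicit the conditional independence $Y_t \bigCI \feat(X)\,\vert\,Y^{R}$ that the paper's decomposition uses implicitly.
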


Theorem \ref{thm.condition} states that if $Y^{R}$ is a refinement of $\mathcal{Y}$ then learning a representation on $Y^{R}$ is a \emph{sufficient} condition for this representation to be useful on $\mathcal{Y}$. 
For a formal exposition of these points and accompanying proofs, please refer to Appendix \ref{sec:refinements_app}.
Recall that the instance discrimination proxy task is the most fine-grained refinement, and so the left hand side of \ref{eq:condition} is satisfied for any downstream task satisfying the stated assumptions of the theorem.

We generalize contrastive learning through refinements and connect representations learned on refinements and downstream tasks in Theorem \ref{thm.condition}. Thus, using causality we provide an alternative explanation to mutual information for the success of contrastive learning.
Note that our methodology of refinements is not limited to instance discrimination tasks and is thus more general than currently used contrastive losses. 
Real world data often includes rich sources of metadata which can be used to guide the construction of refinements
by grouping the data according to any available meta-data. 
Note that the coarser we can create a refinement, the more data efficient we can expect to be when learning representations for downstream tasks.
Further, we can also expect to require less supervised data to finetune the representation.

\section{Related Work} \label{sec:related}

{\bf Contrastive objectives and mutual information maximization.}
Many recent approaches to self-supervised learning are rooted in the well-established idea of maximizing mutual information (MI), e.g. Contrastive Predictive Coding (CPC) \citep{oord2018representation, henaff2019data}, Deep InfoMax (DIM) \citep{hjelm2018learning} and Augmented Multiscale DIM (AMDIM) \citep{bachman2019learning}. These methods are based on noise contrastive estimation (NCE) \citep{gutmann2010noise} which, under specific conditions, can be viewed as a bound on MI \citep{poole2019variational}. The resulting objective functions are commonly referred to as InfoNCE.

The precise role played by mutual information maximization in self-supervised learning is subject to some debate. \citep{tschannen2019mutual} argue that the performance on downstream tasks is not correlated with the achieved bound on MI, but may be more tightly correlated with encoder architecture and capacity. 
Importantly, InfoNCE objectives require custom architectures to ensure the network does not converge to non-informative solutions thus precluding the use of standard architectures. 
Recently, several works \citep{he2019momentum, chen2020simple} successfully combined contrastive estimation with a standard ResNet-50 architecture.
In particular, SimCLR \citep{chen2020simple} relies on a set of \emph{strong} augmentations\footnote{The set of augmentations includes Gaussian blurring, various colour distortions, flips and random cropping.}, while \citep{he2019momentum} uses a memory bank. Inspired by target networks in reinforcement learning, \citep{grill2020bootstrap} proposed BYOL: an algorithm for self-supervised learning which remarkably does not use a contrastive objective. Although theoretical explanation for the good performance of BYOL is presently missing, interestingly the objective, an $\ell_2$ distance between two different embeddings of the input data resembles the $\ell_2$ form of our regularizer proposed in Equation \ref{eq:task_euclidean} in Appendix \ref{sec:gen-theory_main}. 

\begin{wrapfigure}{r}{0.27\textwidth}
\vspace{-0.5cm}
  \begin{center}
    \includegraphics[width=0.27\textwidth]{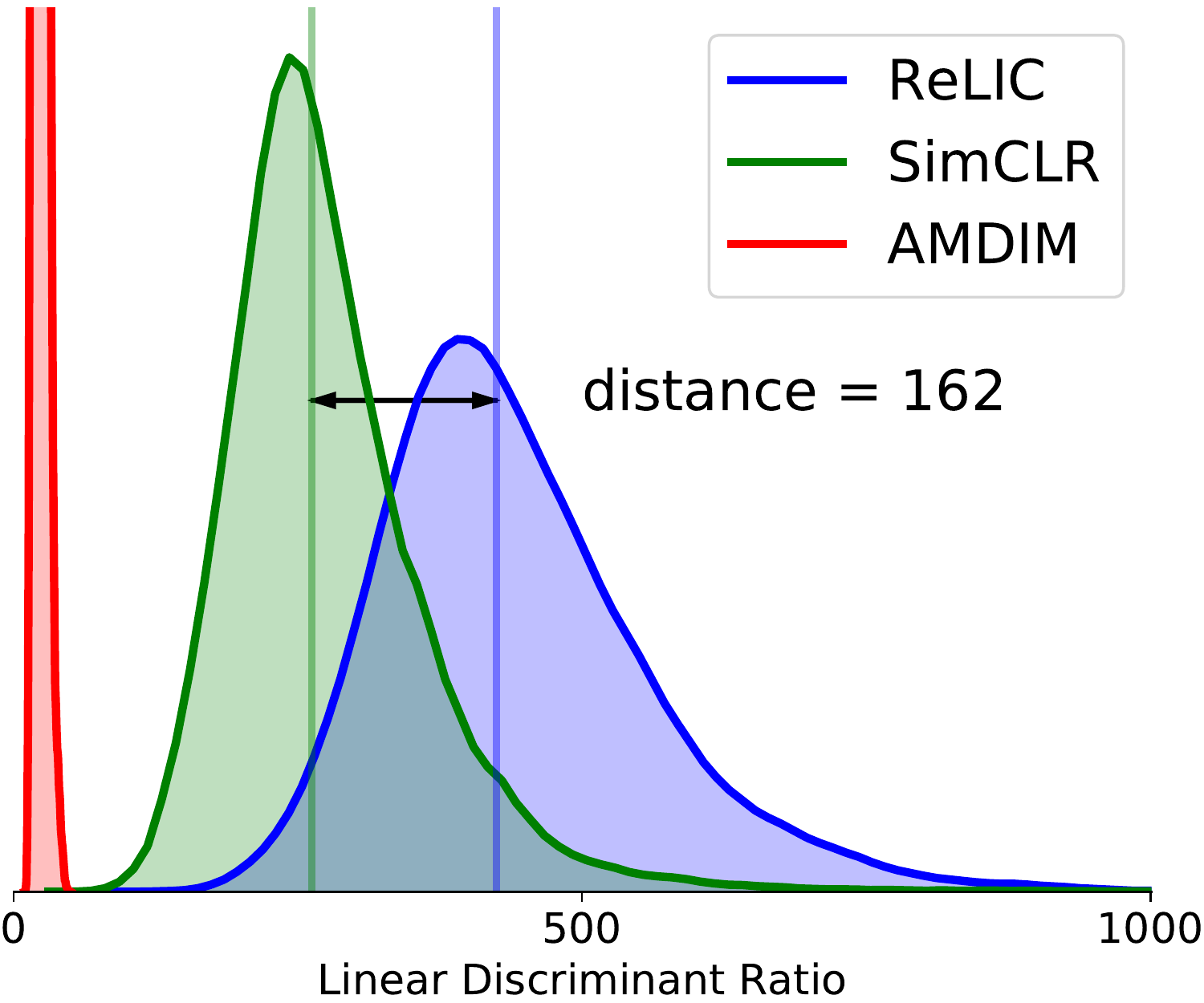}
  \end{center}
    \caption{Distribution of the linear discriminant ratio ($F_{\text{LDA}}$, see text) of $\feat$ for \relic{}, SimCLR and AMDIM ($y$-axis clipped to aid visualization). \label{fig:distances}}  
  \vspace{-0.3cm}
\end{wrapfigure}

Recently, \citep{saunshi2019theoretical} proposed a learning theoretic framework to analyze the performance of contrastive objectives. 
However, without strong assumptions on intra-class concentration they note that contrastive objectives are fundamentally limited in the representations they are able to learn. 
\relic{} explicitly enforces intra-class concentration via the invariance regularizer, ensuring that it generalizes under weaker assumptions. 
Unlike \citep{saunshi2019theoretical} which do not discuss augmentations, we incorporate augmentations into our theoretical explanation of contrastive methods. 

The reasons for the improvement in performance from AMDIM through to SimCLR and BYOL are not easily explained by either the MI maximization or the learning theoretic viewpoint. 
Further, it is not clear why relatively minor architectural differences between the methods result in significant differences in performance nor is it obvious how current state-of-the-art can be improved. 
In contrast to prior art, the performance of \relic{} is explained by connections to causal theory. 
As such it gives a clear path for improving results by devising problem appropriate refinements, interventions and invariance penalties.
Furthermore, the use of invariance penalties in \relic{} as dictated by causal theory yields significantly more robust representations that generalize better than those learned with SimCLR or BYOL.

{\bf Causality and invariance.}
Recently, the notion of invariant prediction has emerged as an important operational concept in causal inference \citep{peters2016causal}. 
This idea has been used to learn classifiers which are robust against domain shifts \citep{gong2016domain}. 
Notably, \citep{heinze2017conditional} 
propose to use group structure to delineate between different environments 
where the aim is to minimize the classification loss while also ensuring that the conditional variance of the prediction function within each group remains small.
Unlike \citep{heinze2017conditional} who use supervised data and rely on having a grouping in the training data, our approach does not rely on ground-truth targets and can flexibly create groupings of the training data if none are present. 
Further, we enforce invariant prediction within the group by constraining the distance between distributions resulting from contrasting data across groups. 

\section{Experiments}
\label{sec:experiments}

We first visualize the influence of the explicit invariance constraint in \relic{} on the linear separability of the learned representations.
We then evaluate \relic{} on a number of prediction and reinforcement learning tasks for usefulness and robustness. 
For the prediction tasks, we test \relic{} after pretraining the representation in a self-supervised way on the training set of the ImageNet ILSVRC-2012 dataset \citep{Russakovsky2015ImageNetLS}.
We evaluate \relic{} in the linear evaluation setup on ImageNet and test its robustness and out-of-distribution generalization on datasets related to ImageNet. 
Unlike much prior work in contrastive learning which focuses specifically on computer vision tasks, we test \relic{} also in the context of learning representations for reinforcement learning. 
Specifically, we test \relic{} on the suite of Atari games \citep{Bellemare2013TheAL} which consists of $57$ diverse games of varying difficulty.

\paragraph{Linear evaluation.} 
In order to understand how representations learned by \relic{} differ from other methods, we compare it against those learned by AMDIM and SimCLR in terms of Fischer's \emph{linear discriminant ratio} \citep{friedman2001elements}:
$
F_{\text{LDA}} = \Vert \mu_k - \mu_{k'}\Vert^2/\sum_{i,j\in \mathcal{C}_k} \Vert\feat(x_i) - \feat(x_j)\Vert^2 
\label{eq:lda}
$
where $\mu_k = \frac{1}{|\mathcal{C}_k|} \sum_{i\in \mathcal{C}_k} \feat(x_i)$ is the mean of the representations of class $k$ and $ \mathcal{C}_k$ is the index set of that class. 
A larger $F_{\text{LDA}}$ implies that classes are more easily separated with a linear classifier. This can be achieved by either increasing distances between classes (numerator) or shrinking within-class variance (denominator).

Figure \ref{fig:distances} shows the distribution of $F_{\text{LDA}}$ for \relic{}, SimCLR and AMDIM after training as measured on the (downsampled) ImageNet validation set. The distance between medians of \relic{} and SimCLR is 162. AMDIM is tightly concentrated close to 20. 
The invariance penalty ensures that---even though labels are \emph{a-priori} unknown---for \relic{} within-class variability of $\feat$ is concentrated leading to better linear separability between classes in the downstream task of interest.
This is reflected in the rightward shift of the distribution of $F_{\text{LDA}}$ in Figure \ref{fig:distances} for \relic{} compared with SimCLR and AMDIM which do not impose such a constraint.

\begin{wraptable}{r}{0.58\textwidth}
\caption{Accuracy (in \%) under linear evaluation on ImageNet for different self-supervised representation learning methods. Methods with * use SimCLR augmentations. Methods with $\dagger$ use custom, stronger augmentations.}
\label{table.imagenet_clean}
\begin{center}
\begin{tabular}{lccc}
\hline
Method & &Top-1 & Top-5 \\
\hline
\emph{ResNet-50 architecture} \\
\; PIRL {\footnotesize\citep{misra2020self}} && 63.6 & -  \\
\; CPC v2 {\footnotesize\citep{henaff2019data}} && 63.8 & 85.3 \\
\; CMC {\footnotesize\citep{tian2019contrastive}} && 66.2 & 87.0 \\
\; SimCLR {\footnotesize\citep{chen2020simple}} & * & 69.3 & 89.0 \\
\; SwAV {\footnotesize \citep{Caron2020UnsupervisedLO}} & * & 70.1 & - \\
\; \relic{} {\footnotesize(ours)} & * & 70.3 & 89.5 \\
\; InfoMin Aug. {\footnotesize\citep{Tian2020WhatMF}} & $\dagger$ & 73.0 &  91.1 \\
\; SwAV {\footnotesize \citep{Caron2020UnsupervisedLO}} & $\dagger$ & 75.3 & - \\
\emph{ResNet-50 with target network} \\
\; MoCo v2 {\footnotesize \citep{Chen2020ImprovedBW}}&& 71.1 &  - \\
\; BYOL {\footnotesize \citep{grill2020bootstrap}} & *& 74.3 & 91.6 \\
\; \relic{} {\footnotesize (ours)} &*& 74.8 & 92.2 \\ 
\hline
\end{tabular}
\end{center}
\end{wraptable}

Next we evaluate \relic{}'s representation by training a linear classifier of top of the fixed encoder following the procedure in \citep{Kolesnikov2019RevisitingSV,chen2020simple} and Appendix \ref{sec:linear_imagenet_app}.
In Table \ref{table.imagenet_clean}, we report top-1 and top-5 accuracy on the ImageNet test set.
Methods denoted with * use SimCLR augmentations \citep{chen2020simple}, while methods denoted $\dagger$ use custom, stronger augmentations.
Comparing methods which use SimCLR augmentations, \relic{} outperforms competing approaches on both ResNet-50 and ResNet-50 with target network.
For completeness, we report results for SwAV \citep{Caron2020UnsupervisedLO} and InfoMin \citep{Tian2020WhatMF}, but note that these methods use stronger augmentations which alone have been shown to boost performance by over $5\%$. A fair comparison between different objectives can only be achieved under the same architecture and the same set of augmentations.

\paragraph{Robustness and generalization.}
We evaluate robustness and out-of-distribution generalization of \relic{}'s representation on datasets Imagenet-C \citep{hendrycks2019robustness} and ImageNet-R \citep{hendrycks2020many}, respectively. 
To evaluate \relic{}'s representation, we train a linear classifier on top of the frozen representation following the procedure described in \citep{chen2020simple} and appendix \ref{sec:robust_app.eval}. 
For Imagenet-C we report the mean Corruption Error (mCE) and Corruption Errors for Noise corruptions in Table \ref{table.imagenet_c}.
\relic \, has significantly lower mCE than both the supervised ResNet-50 baseline and the unsupervised methods SimCLR and BYOL. 
Also, it has the lowest Corruption Error on 14 out of 15 corruptions when compared to SimCLR and BYOL.
Thus, we see that \relic{} learns the most robust representation. 
\relic \, also outperforms SimCLR and BYOL on ImageNet-R showing its superior out-of-distribution generalization ability; see Table \ref{table.imagenet_r}.
For further details and results please consult \ref{sec:robust_app}.

\begin{table}[ht]
\caption{Top-1 error rates for different self-supervised representation learning methods on ImageNet-R. All models are trained only on clean ImageNet images and \relic{}$_{T}$ refers to \relic{} using a ResNet-50 with target network as in BYOL \citep{grill2020bootstrap}.}
\label{table.imagenet_r}
\begin{center}
\begin{tabular}{lccccc}
\hline
Method & Supervised & SimCLR & \relic{} (ours) & BYOL & \relic{}$_{T}$ (ours) \\
\hline
Top-1 Error (\%) & 63.9 & 81.7 & 77.4 & 77.0 & 76.2  \\
\hline
\end{tabular}
\end{center}
\end{table}

\begin{table}[ht]
\caption{Mean Corruption Error (mCE), mean relative Corruption Error (mrCE) and Corruption Errors for the ``Noise'' class of corruptions (Gaussian, Shot, Impulse) on ImageNet-C. The mCE value is the average across $75$ different corruptions. Methods are trained only on clean ImageNet images.}
\label{table.imagenet_c}
\begin{center}
\begin{tabular}{lccccc}
\hline
\multicolumn{1}{c}{Method} & \multicolumn{1}{c}{mCE}  & \multicolumn{1}{c}{mrCE} & \multicolumn{1}{c}{Gaussian} & \multicolumn{1}{c}{Shot} & \multicolumn{1}{c}{Impulse} \\
\hline
Supervised &  76.7 & 105.0 & 80.0 & 82.0 & 83.0\\
\emph{ResNet-50 architecture:} \\
\quad SimCLR  &  87.5 & 111.9 & 79.4& 81.9& 89.6\\
\quad ReLIC (ours) &    76.4 & {\bf 87.7} & 67.8& 70.7& 77.0 \\
\emph{ResNet-50 with target network:} & & & \\
\quad BYOL  &  72.3 & 90.0 & 65.9& 68.4& 73.7 \\ 
\quad ReLIC (ours) &  {\bf 70.8} & 88.4 & {\bf 63.6} & {\bf 65.7} & {\bf 69.2}\\ 
\hline
\end{tabular}
\end{center}
\end{table}

\paragraph{Reinforcement Learning.}
Much prior work in contrastive learning has focused specifically on computer vision tasks. In order to compare these approaches in a different domain, we investigate representation learning in the context of reinforcement learning. 
We compare \relic{} as an auxiliary loss against other state of the art self-supervised losses on an agent trained on 57 Atari games. 
Using human normalized scores as a metric, we use the original architecture and hyperparameters of the R2D2 agent \citep{kapturowski2019} and supplement it with a second encoder trained with a given representation learning loss. 
When auxiliary losses are present, the Q-Network takes the output of the second encoder as an input. 
The Q-Network and the encoder are trained with separate optimizers. 
For the augmentation baseline, the Q-Network takes two identical encoders trained end-to-end. 
Table \ref{table.rl} shows a comparison between \relic{}, SimCLR, BYOL, CURL \citep{srinivas2020curl}, and feeding augmented observations directly to the agent \citep{kostrikov2020image}. 
We find that \relic{} has a significant advantage over competing self-supervised methods, performing best in 25 out of 57 games. The next best performing method, CURL performs best in 11 games. Full details are presented in Section~\ref{sec:atari_app}.
\begin{table}[ht]
\caption{Human Normalized Scores over 57 Atari Games.}
\label{table.rl}
\vspace{0.2cm}
\begin{tabular}{c|c|c|c|c|c}
\hline
 Atari Performance & \relic{} & SimCLR & CURL &  BYOL & Augmentation \\
\hline
 Capped mean & \bf{91.46} & 88.76 & 90.72 & 89.43 & 80.60 \\
 Number of superhuman games & \bf{51} & 49 & 49 & 49 & 34 \\
 Mean & \bf{3003.73} & 2086.16 & 2413.12 & 1769.43 & 503.15 \\
 Median & \bf{832.50} & 592.83 & 819.56 & 483.39 & 132.17 \\
 40\% Percentile & 356.27 & 266.07 & \bf{409.46} & 224.80 & 94.35 \\
 30\% Percentile & \bf{202.49} & 174.19 & 190.96 & 150.21 & 80.04 \\
 20\% Percentile & \bf{133.93} & 120.84 & 126.10 & 118.36 & 57.95 \\
 10\% Percentile & \bf{83.79} & 37.19 & 59.09 & 44.14 & 32.74 \\
 5\% Percentile & \bf{20.87} & 12.74 & 20.56 & 7.75 & 2.85 \\
\hline
\end{tabular}
\end{table}

\section{Conclusion}

In this work we have analyzed self-supervised learning using a causal framework. 
Using a causal graph, we have formalized the problem of self-supervised representation learning and derived properties of the optimal representation.
We have shown that representations need to be invariant predictors of proxy targets under interventions on features that are only correlated, but not causally related to the downstream tasks.
We have leveraged data augmentations to simulate these interventions and have proposed to explicitly enforce this invariance constraint.
Based on this, we have proposed a new self-supervised objective, Representation Learning via Invariant Causal Mechanisms (\relic{}), that enforces invariant prediction of proxy targets across augmentations using an invariance regularizer.
Further, we have generalized contrastive methods using the concept of refinements and have shown that learning a representation on refinements using the principle of invariant prediction is a sufficient condition for these representations to generalize to downstream tasks.
With this, we have provided an alternative explanation to mutual information for the success of contrastive methods.
Empirically we have compared \relic{} against recent self-supervised methods on a variety of prediction and reinforcement learning tasks.
Specifically, \relic{} significantly outperforms competing methods in terms of robustness and out-of-distribution generalization of the representations it learns on ImageNet. \relic{} also significantly outperforms related self-supervised methods on the Atari suite achieving superhuman performance on $51$ out of $57$ games.
We aim to investigate the construction of more coarse-grained refinements and the empirical evaluation of different kinds of refinements in future work.

\paragraph{Acknowledgements.} We thank David Balduzzi, Melanie Rey, Christina Heinze-Deml, Ilja Kuzborskij, Ali Eslami, Jeffrey de Fauw and Josip Djolonga for invaluable discussions.

\bibliographystyle{iclr2021_conference}
\bibliography{relic}

\begin{thebibliography}{40}
\providecommand{\natexlab}[1]{#1}
\providecommand{\url}[1]{\texttt{#1}}
\expandafter\ifx\csname urlstyle\endcsname\relax
  \providecommand{\doi}[1]{doi: #1}\else
  \providecommand{\doi}{doi: \begingroup \urlstyle{rm}\Url}\fi

\bibitem[Bachman et~al.(2019)Bachman, Hjelm, and
  Buchwalter]{bachman2019learning}
Philip Bachman, R~Devon Hjelm, and William Buchwalter.
\newblock Learning representations by maximizing mutual information across
  views.
\newblock In \emph{Advances in Neural Information Processing Systems}, pp.\
  15509--15519, 2019.

\bibitem[Bellemare et~al.(2013)Bellemare, Naddaf, Veness, and
  Bowling]{Bellemare2013TheAL}
Marc~G. Bellemare, Yavar Naddaf, J.~Veness, and Michael Bowling.
\newblock The arcade learning environment: An evaluation platform for general
  agents (extended abstract).
\newblock \emph{J. Artif. Intell. Res.}, 47:\penalty0 253--279, 2013.

\bibitem[Caron et~al.(2020)Caron, Misra, Mairal, Goyal, Bojanowski, and
  Joulin]{Caron2020UnsupervisedLO}
M.~Caron, I.~Misra, J.~Mairal, Priya Goyal, P.~Bojanowski, and Armand Joulin.
\newblock Unsupervised learning of visual features by contrasting cluster
  assignments.
\newblock \emph{ArXiv}, abs/2006.09882, 2020.

\bibitem[Chalupka et~al.(2014)Chalupka, Perona, and
  Eberhardt]{chalupka2014visual}
Krzysztof Chalupka, Pietro Perona, and Frederick Eberhardt.
\newblock Visual causal feature learning.
\newblock \emph{arXiv preprint arXiv:1412.2309}, 2014.

\bibitem[Chen et~al.(2020{\natexlab{a}})Chen, Kornblith, Norouzi, and
  Hinton]{chen2020simple}
Ting Chen, Simon Kornblith, Mohammad Norouzi, and Geoffrey Hinton.
\newblock A simple framework for contrastive learning of visual
  representations.
\newblock \emph{arXiv preprint arXiv:2002.05709}, 2020{\natexlab{a}}.

\bibitem[Chen et~al.(2020{\natexlab{b}})Chen, Fan, Girshick, and
  He]{Chen2020ImprovedBW}
Xinlei Chen, Haoqi Fan, Ross~B. Girshick, and Kaiming He.
\newblock Improved baselines with momentum contrastive learning.
\newblock \emph{ArXiv}, abs/2003.04297, 2020{\natexlab{b}}.

\bibitem[Erd{\H{o}}s \& R{\'e}nyi(1960)Erd{\H{o}}s and
  R{\'e}nyi]{erdHos1960evolution}
Paul Erd{\H{o}}s and Alfr{\'e}d R{\'e}nyi.
\newblock On the evolution of random graphs.
\newblock \emph{Publ. Math. Inst. Hung. Acad. Sci}, 5\penalty0 (1):\penalty0
  17--60, 1960.

\bibitem[Friedman et~al.(2009)Friedman, Hastie, and
  Tibshirani]{friedman2001elements}
Jerome Friedman, Trevor Hastie, and Robert Tibshirani.
\newblock \emph{The elements of statistical learning}, volume~2.
\newblock Springer series in statistics New York, 2009.

\bibitem[Frieze \& Karo{\'n}ski(2016)Frieze and
  Karo{\'n}ski]{frieze2016introduction}
Alan Frieze and Micha{\l} Karo{\'n}ski.
\newblock \emph{Introduction to random graphs}.
\newblock Cambridge University Press, 2016.

\bibitem[Gong et~al.(2016)Gong, Zhang, Liu, Tao, Glymour, and
  Sch{\"o}lkopf]{gong2016domain}
Mingming Gong, Kun Zhang, Tongliang Liu, Dacheng Tao, Clark Glymour, and
  Bernhard Sch{\"o}lkopf.
\newblock Domain adaptation with conditional transferable components.
\newblock In \emph{International conference on machine learning}, pp.\
  2839--2848, 2016.

\bibitem[Grill et~al.(2020)Grill, Strub, Altch{\'e}, Tallec, Richemond,
  Buchatskaya, Doersch, Pires, Guo, Azar, et~al.]{grill2020bootstrap}
Jean-Bastien Grill, Florian Strub, Florent Altch{\'e}, Corentin Tallec,
  Pierre~H Richemond, Elena Buchatskaya, Carl Doersch, Bernardo~Avila Pires,
  Zhaohan~Daniel Guo, Mohammad~Gheshlaghi Azar, et~al.
\newblock Bootstrap your own latent: A new approach to self-supervised
  learning.
\newblock \emph{arXiv preprint arXiv:2006.07733}, 2020.

\bibitem[Gutmann \& Hyv{\"a}rinen(2010)Gutmann and
  Hyv{\"a}rinen]{gutmann2010noise}
Michael Gutmann and Aapo Hyv{\"a}rinen.
\newblock Noise-contrastive estimation: A new estimation principle for
  unnormalized statistical models.
\newblock In \emph{Proceedings of the Thirteenth International Conference on
  Artificial Intelligence and Statistics}, pp.\  297--304, 2010.

\bibitem[Hadsell et~al.(2006)Hadsell, Chopra, and
  LeCun]{hadsell2006dimensionality}
Raia Hadsell, Sumit Chopra, and Yann LeCun.
\newblock Dimensionality reduction by learning an invariant mapping.
\newblock In \emph{2006 IEEE Computer Society Conference on Computer Vision and
  Pattern Recognition (CVPR'06)}, volume~2, pp.\  1735--1742. IEEE, 2006.

\bibitem[He et~al.(2016)He, Zhang, Ren, and Sun]{he2016deep}
Kaiming He, Xiangyu Zhang, Shaoqing Ren, and Jian Sun.
\newblock Deep residual learning for image recognition.
\newblock In \emph{Proceedings of the IEEE conference on computer vision and
  pattern recognition}, pp.\  770--778, 2016.

\bibitem[He et~al.(2019)He, Fan, Wu, Xie, and Girshick]{he2019momentum}
Kaiming He, Haoqi Fan, Yuxin Wu, Saining Xie, and Ross Girshick.
\newblock Momentum contrast for unsupervised visual representation learning.
\newblock \emph{arXiv preprint arXiv:1911.05722}, 2019.

\bibitem[Heinze-Deml \& Meinshausen(2017)Heinze-Deml and
  Meinshausen]{heinze2017conditional}
Christina Heinze-Deml and Nicolai Meinshausen.
\newblock Conditional variance penalties and domain shift robustness.
\newblock \emph{arXiv preprint arXiv:1710.11469}, 2017.

\bibitem[H{\'e}naff et~al.(2019)H{\'e}naff, Razavi, Doersch, Eslami, and
  Oord]{henaff2019data}
Olivier~J H{\'e}naff, Ali Razavi, Carl Doersch, SM~Eslami, and Aaron van~den
  Oord.
\newblock Data-efficient image recognition with contrastive predictive coding.
\newblock \emph{arXiv preprint arXiv:1905.09272}, 2019.

\bibitem[Hendrycks \& Dietterich(2019)Hendrycks and
  Dietterich]{hendrycks2019robustness}
Dan Hendrycks and Thomas Dietterich.
\newblock Benchmarking neural network robustness to common corruptions and
  perturbations.
\newblock \emph{Proceedings of the International Conference on Learning
  Representations}, 2019.

\bibitem[Hendrycks et~al.(2020)Hendrycks, Basart, Mu, Kadavath, Wang, Dorundo,
  Desai, Zhu, Parajuli, Guo, et~al.]{hendrycks2020many}
Dan Hendrycks, Steven Basart, Norman Mu, Saurav Kadavath, Frank Wang, Evan
  Dorundo, Rahul Desai, Tyler Zhu, Samyak Parajuli, Mike Guo, et~al.
\newblock The many faces of robustness: A critical analysis of
  out-of-distribution generalization.
\newblock \emph{arXiv preprint arXiv:2006.16241}, 2020.

\bibitem[Hjelm et~al.(2018)Hjelm, Fedorov, Lavoie-Marchildon, Grewal, Bachman,
  Trischler, and Bengio]{hjelm2018learning}
R~Devon Hjelm, Alex Fedorov, Samuel Lavoie-Marchildon, Karan Grewal, Phil
  Bachman, Adam Trischler, and Yoshua Bengio.
\newblock Learning deep representations by mutual information estimation and
  maximization.
\newblock \emph{arXiv preprint arXiv:1808.06670}, 2018.

\bibitem[Ioffe \& Szegedy(2015)Ioffe and Szegedy]{Ioffe2015BatchNA}
S.~Ioffe and Christian Szegedy.
\newblock Batch normalization: Accelerating deep network training by reducing
  internal covariate shift.
\newblock \emph{ArXiv}, abs/1502.03167, 2015.

\bibitem[Kapturowski et~al.(2019)Kapturowski, Ostrovski, Dabney, Quan, and
  Munos]{kapturowski2019}
Steven Kapturowski, Georg Ostrovski, Will Dabney, John Quan, and Remi Munos.
\newblock Recurrent experience replay in distributed reinforcement learning.
\newblock \emph{Iclr}, 2019.

\bibitem[Kolesnikov et~al.(2019)Kolesnikov, Zhai, and
  Beyer]{Kolesnikov2019RevisitingSV}
A.~Kolesnikov, Xiaohua Zhai, and Lucas Beyer.
\newblock Revisiting self-supervised visual representation learning.
\newblock \emph{2019 IEEE/CVF Conference on Computer Vision and Pattern
  Recognition (CVPR)}, pp.\  1920--1929, 2019.

\bibitem[Kostrikov et~al.(2020)Kostrikov, Yarats, and
  Fergus]{kostrikov2020image}
Ilya Kostrikov, Denis Yarats, and Rob Fergus.
\newblock Image augmentation is all you need: Regularizing deep reinforcement
  learning from pixels.
\newblock \emph{arXiv preprint arXiv:2004.13649}, 2020.

\bibitem[Loshchilov \& Hutter(2017)Loshchilov and Hutter]{Loshchilov2017SGDRSG}
I.~Loshchilov and F.~Hutter.
\newblock Sgdr: Stochastic gradient descent with warm restarts.
\newblock In \emph{ICLR}, 2017.

\bibitem[Misra \& Maaten(2020)Misra and Maaten]{misra2020self}
Ishan Misra and Laurens van~der Maaten.
\newblock Self-supervised learning of pretext-invariant representations.
\newblock In \emph{Proceedings of the IEEE/CVF Conference on Computer Vision
  and Pattern Recognition}, pp.\  6707--6717, 2020.

\bibitem[Nair \& Hinton(2010)Nair and Hinton]{Nair2010RectifiedLU}
V.~Nair and Geoffrey~E. Hinton.
\newblock Rectified linear units improve restricted boltzmann machines.
\newblock In \emph{ICML}, 2010.

\bibitem[Oord et~al.(2018)Oord, Li, and Vinyals]{oord2018representation}
Aaron van~den Oord, Yazhe Li, and Oriol Vinyals.
\newblock Representation learning with contrastive predictive coding.
\newblock \emph{arXiv preprint arXiv:1807.03748}, 2018.

\bibitem[Pearl(2009)]{pearl2009causality}
Judea Pearl.
\newblock \emph{Causality}.
\newblock Cambridge university press, 2009.

\bibitem[Peters et~al.(2016)Peters, B{\"u}hlmann, and
  Meinshausen]{peters2016causal}
Jonas Peters, Peter B{\"u}hlmann, and Nicolai Meinshausen.
\newblock Causal inference by using invariant prediction: identification and
  confidence intervals.
\newblock \emph{Journal of the Royal Statistical Society: Series B (Statistical
  Methodology)}, 78\penalty0 (5):\penalty0 947--1012, 2016.

\bibitem[Peters et~al.(2017)Peters, Janzing, and
  Sch{\"o}lkopf]{peters2017elements}
Jonas Peters, Dominik Janzing, and Bernhard Sch{\"o}lkopf.
\newblock \emph{Elements of causal inference: foundations and learning
  algorithms}.
\newblock MIT press, 2017.

\bibitem[Poole et~al.(2019)Poole, Ozair, Van Den~Oord, Alemi, and
  Tucker]{poole2019variational}
Ben Poole, Sherjil Ozair, Aaron Van Den~Oord, Alex Alemi, and George Tucker.
\newblock On variational bounds of mutual information.
\newblock In \emph{International Conference on Machine Learning}, pp.\
  5171--5180, 2019.

\bibitem[Russakovsky et~al.(2015)Russakovsky, Deng, Su, Krause, Satheesh, Ma,
  Huang, Karpathy, Khosla, Bernstein, Berg, and
  Fei-Fei]{Russakovsky2015ImageNetLS}
Olga Russakovsky, J.~Deng, H.~Su, J.~Krause, S.~Satheesh, S.~Ma, Zhiheng Huang,
  A.~Karpathy, A.~Khosla, M.~Bernstein, A.~Berg, and Li~Fei-Fei.
\newblock Imagenet large scale visual recognition challenge.
\newblock \emph{International Journal of Computer Vision}, 115:\penalty0
  211--252, 2015.

\bibitem[Saunshi et~al.(2019)Saunshi, Plevrakis, Arora, Khodak, and
  Khandeparkar]{saunshi2019theoretical}
Nikunj Saunshi, Orestis Plevrakis, Sanjeev Arora, Mikhail Khodak, and
  Hrishikesh Khandeparkar.
\newblock A theoretical analysis of contrastive unsupervised representation
  learning.
\newblock In \emph{International Conference on Machine Learning}, pp.\
  5628--5637, 2019.

\bibitem[Srinivas et~al.(2020)Srinivas, Laskin, and Abbeel]{srinivas2020curl}
Aravind Srinivas, Michael Laskin, and Pieter Abbeel.
\newblock Curl: Contrastive unsupervised representations for reinforcement
  learning.
\newblock \emph{arXiv preprint arXiv:2004.04136}, 2020.

\bibitem[Tian et~al.(2019)Tian, Krishnan, and Isola]{tian2019contrastive}
Yonglong Tian, Dilip Krishnan, and Phillip Isola.
\newblock Contrastive multiview coding.
\newblock \emph{arXiv preprint arXiv:1906.05849}, 2019.

\bibitem[Tian et~al.(2020)Tian, Sun, Poole, Krishnan, Schmid, and
  Isola]{Tian2020WhatMF}
Yonglong Tian, C.~Sun, Ben Poole, Dilip Krishnan, C.~Schmid, and Phillip Isola.
\newblock What makes for good views for contrastive learning.
\newblock \emph{ArXiv}, abs/2005.10243, 2020.

\bibitem[Tschannen et~al.(2019)Tschannen, Djolonga, Rubenstein, Gelly, and
  Lucic]{tschannen2019mutual}
Michael Tschannen, Josip Djolonga, Paul~K Rubenstein, Sylvain Gelly, and Mario
  Lucic.
\newblock On mutual information maximization for representation learning.
\newblock \emph{arXiv preprint arXiv:1907.13625}, 2019.

\bibitem[Wang et~al.(2019)Wang, Ge, Xing, and Lipton]{Wang2019LearningRG}
Haohan Wang, Songwei Ge, E.~Xing, and Zachary~Chase Lipton.
\newblock Learning robust global representations by penalizing local predictive
  power.
\newblock \emph{ArXiv}, abs/1905.13549, 2019.

\bibitem[You et~al.(2017)You, Gitman, and Ginsburg]{you2017large}
Yang You, Igor Gitman, and Boris Ginsburg.
\newblock Large batch training of convolutional networks.
\newblock \emph{arXiv preprint arXiv:1708.03888}, 2017.

\end{thebibliography}

\appendix
\onecolumn
\section{Relationship between \relic{} and other methods} \label{sec:relationship}

\begin{table}[h]
\begin{center}
\caption{The objective in \eqref{eq:loss} recovers state of the art methods depending on design choices  (``-'' denotes the identity function and ``norml.'' means $g$ is constrained to have unit norm). \label{tab:methods}}
\vspace{0.2cm}
\begin{tabular}{l|c|c|c}
\multicolumn{1}{c|}{Method}       & $\phi$                               & $g$  & Regl. \\ \hline
CPC   \citep{henaff2019data}   & $\langle g, W g \rangle$     & PixelCNN & - \\
AMDIM \citep{bachman2019learning}  & $\langle \cdot, \cdot \rangle$       & - & - \\
SimCLR \citep{chen2020simple} & $\langle g, g \rangle$ & MLP, norml. & -  \\
BYOL \citep{grill2020bootstrap} & - & $g_1$, $g_2$ 1 layer MLP, norml. & $\Vert g_1(g_2) - g_2  \Vert^2$   \\ 
\relic \; (ours)& $\langle g, g \rangle$ & MLP, norml. & \Eqref{eq:loss}  \\
\end{tabular}
\end{center}
\end{table}

\section{Distance concentration and generalization} \label{sec:gen-theory_main}

Quantifying the generalization performance of representations learned on unlabelled data is a difficult task without imposing assumptions on the underlying structure of the data and the downstream tasks of interest. The results in \citep{saunshi2019theoretical} assume a latent class structure underlying the data. The similarity of images under each (potentially overlapping) latent class $c$ is measured by a probability distribution $\mathcal{D}_c$. In the contrastive setting a positive pair of points $\{x, x^{+}\}$ is said to be sampled from a distribution $\mathbb{E}_c \mathcal{D}_c(x) \mathcal{D}_c(x^{+})$ and a negative example $x^{-}$ is sampled from the marginal distribution. The task of interest is multi-class classification using the learned representation. In our setting the augmented data points $\{x^{a_l}_i, x_i^{a_k}\}$ and $\{x^{a_l}_i, x_m^{a_k}\}_{m=1}^M$ take the roles of the pairs of positive and negative points, respectively.

In this section, under the same structural assumptions on the data as \citep{saunshi2019theoretical} we will show that a similar result holds but under weaker assumptions on the function, $f$. 

\begin{wrapfigure}{r}{0.30\textwidth}
\vspace{-0.5cm}
  \begin{center}
    \includegraphics[width=0.33\textwidth]{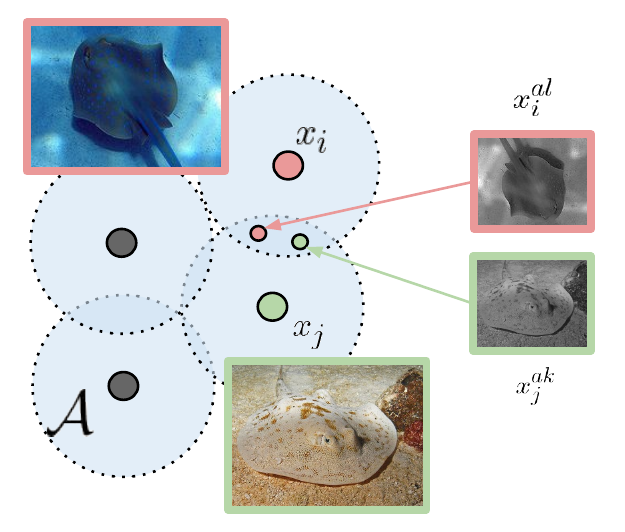}
  \end{center}
  \caption{Visual representation of invariance penalty. Shaded region denotes set of augmentations around an image.\label{fig:invariance}}
  \vspace{-0.7cm}
\end{wrapfigure}

To intuit the following results, we can view our explicit invariance constraint through the lens of distance concentration. 
Its effect can be seen intuitively in Figure~\ref{fig:invariance}. The shaded region represents the set of augmentations, $\mathcal{A}$ around an image. Depicted are two images $x_i$ and $x_j$ from the ImageNet class Stingray. 
The points $x_i^{a_{l}}$ and $x_j^{a_{k}}$ are augmentations which correspond to a region of overlap between the augmentation sets of $x_i$ and $x_j$. If the augmentations $\feat(x_i^{a_{l}})$ and $\feat(x_j^{a_{k}})$ are similar enough, encouraging $\feat(x_i)$ to be close to $\feat(x_i^{a_{l}})$ and similarly for $\feat(x_j)$ and $\feat(x_j^{a_{k}})$ indirectly encourages $\feat(x_i)$ to be close to $\feat(x_j)$. This has the effect of concentrating distances between similar images. We will make this intuition more formal in the following discussion.

Consider a modified, Euclidean distance regularized version of our objective 
\begin{flalign}
\begin{split}
    & \hat{\feat} \in \argmin_{\feat \in \mathcal{F}} ~~ \sum_{i=1}^N \sum_{a_{lk}}
    \ell(\{ \feat(x^{a_l}_i)^\top (\feat(x_i^{a_k}) - \feat(x_m^{a_k})) \}_{m=1}^M)
    \\
     & ~~~ s.t. ~~~ \Vert f(x_i) - f(x^{ak}_i) \Vert^2 \leq \rho.  
     \end{split}
     \label{eq:task_euclidean}
\end{flalign}
where $\feat\in \mathcal{F} = \{\feat: \mathcal{X}\mapsto \mathbb{R}^d ~s.t.\ ~ ||\feat||_2\leq T \}$ with $T\geq 0$. Here $\ell(v) = \log(1 + \sum_m \exp(v_m))$ is the logistic loss. For a single negative, this is equivalent to the standard \relic{} objective with an identity critic.

\paragraph{Assumptions.}\label{assn:concentration}
We require that the following assumptions hold:
    {\bf (A1)} $\hat{\feat}$ is $L$-Lipschitz and minimizes \eqref{eq:task_euclidean} such that the constraint is active and 
    {\bf (A2)} x is a bounded variable.

\begin{lemma}[Concentration\label{lem:concentration}]
If assumption (A1) holds for $\rho\leq \frac{B}{6L\kappa}$, and (A2) holds for $x$, $\hat{\feat}(x)$ is a sub-Gaussian random variable with parameter $\sigma^2_f \leq \frac{1}{\kappa} \sigma^2_x$.
\end{lemma}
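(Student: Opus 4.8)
The plan is to bound the sub-Gaussian parameter of $\hat{f}(x)$ by controlling the oscillation of $\hat{f}$ over the support of $x$: the active invariance constraint forces $\hat{f}$ to be nearly constant on each augmentation neighbourhood, $L$-Lipschitzness lets these local pieces be stitched together, and the smallness hypothesis $\rho \le B/(6L\kappa)$ converts the stitched bound into the factor $1/\kappa$; a bounded-range (Hoeffding) argument then upgrades this to sub-Gaussianity.

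First I would reduce to scalars: a centred random vector $Z$ is sub-Gaussian with parameter $\sigma^2$ iff $\E\, e^{\langle u, Z\rangle}\le e^{\|u\|^2\sigma^2/2}$ for every $u$, so it suffices to control $\langle u,\hat{f}(x)-\E\hat{f}(x)\rangle$ for a fixed unit vector $u$. By (A2) the data $x$, hence each projection $\langle u,x\rangle$, is supported in a bounded set; let $\sigma_x^2$ be its sub-Gaussian parameter, which is of the order of the squared diameter of $\mathrm{supp}\,x$. Since $\hat{f}$ is $L$-Lipschitz on this bounded domain, $\hat{f}(x)$ is automatically sub-Gaussian with \emph{some} finite parameter, so the entire content of the lemma is the improved constant.

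The core step is a deterministic diameter estimate for $\hat{f}(\mathrm{supp}\,x)$. The active constraint gives $\|\hat{f}(x')-\hat{f}({x'}^{a})\|\le\sqrt{\rho}$ for every datapoint $x'$ and admissible augmentation $a$, so $\hat{f}$ moves by at most $2\sqrt{\rho}$ across the image of any single augmentation neighbourhood. For arbitrary $x',x''\in\mathrm{supp}\,x$ I would connect them by a minimal chain of overlapping augmentation neighbourhoods — its length being exactly what $\kappa$ measures for the augmentation scheme (cf.\ the definition in Appendix~\ref{sec:refinements_app}) — and bound $\|\hat{f}(x')-\hat{f}(x'')\|$ by summing the $2\sqrt{\rho}$ increments plus the residual $L$-Lipschitz terms across the overlaps; tracking the constant through these triangle inequalities and inserting $\rho\le B/(6L\kappa)$ then yields $\mathrm{diam}\,\hat{f}(\mathrm{supp}\,x)\le\sqrt{2\sigma_x^2/\kappa}$. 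Consequently, for any unit $u$, $\langle u,\hat{f}(x)\rangle$ lies in an interval of length at most $\sqrt{2\sigma_x^2/\kappa}$, so Hoeffding's lemma gives sub-Gaussian parameter at most $\tfrac14\cdot\tfrac{2\sigma_x^2}{\kappa}\le\tfrac{\sigma_x^2}{\kappa}$; since $u$ was arbitrary, $\hat{f}(x)$ is sub-Gaussian with $\sigma_f^2\le\sigma_x^2/\kappa$.

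I expect the main obstacle to be precisely this chaining/covering step: making ``connect two points by overlapping augmentation neighbourhoods'' rigorous needs the exact definition of $\kappa$ and a mild richness assumption on $\mathcal{A}$ guaranteeing that $\mathrm{supp}\,x$ is augmentation-connected with a \emph{uniformly} bounded number of hops — and it is there, not in the soft sub-Gaussian transfer, that the constant $6L$ and the precise form of the hypothesis on $\rho$ are generated. A secondary subtlety is interpreting ``the constraint is active'': I would use it only in the benign direction $\|\hat{f}(x')-\hat{f}({x'}^{a})\|\le\sqrt{\rho}$ (with equality on the binding pairs, which is all that is needed).
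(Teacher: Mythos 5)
Your overall architecture (local invariance constraint $\Rightarrow$ near-constancy on augmentation neighbourhoods $\Rightarrow$ chain through overlaps $\Rightarrow$ bounded oscillation $\Rightarrow$ sub-Gaussianity) matches the spirit of the paper's argument, and your Hoeffding endgame is fine. But the step you yourself flag as the main obstacle — obtaining a \emph{uniformly} bounded number of hops when connecting two arbitrary points by overlapping augmentation neighbourhoods — is exactly the step the paper has to supply, and it is supplied by a mechanism absent from your proposal. The paper models the overlap structure as a graph $G$ on the $N$ data points with an edge whenever $\Vert x_i - x_j\Vert^2 \leq 2B$, treats it as an Erd\H{o}s--Renyi graph $G_{N,\alpha}$ with edge probability $\alpha$, and invokes two random-graph theorems: connectivity holds w.h.p.\ when $\alpha \geq c\log N / N$, and the diameter satisfies $3 \leq D \leq 4$ w.h.p. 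The constant $6$ in $\rho \leq \frac{B}{6L\kappa}$ is then $2\times 3$: the factor $2$ comes from converting the constraint into the contraction $\nrm{\feat(x_i)-\feat(x_j)} \leq \frac{2\rho L}{B}\nrm{x_i - x_j}$ on adjacent pairs, and the factor $3$ is the graph diameter. Relatedly, you misidentify $\kappa$: it is not a property of the augmentation scheme measuring chain length (and it is not defined in the refinements appendix), but simply the target contraction factor in the conclusion $\sigma^2_{\feat} \leq \frac{1}{\kappa}\sigma^2_x$; the chain length is the random-graph diameter $D$. Without either the probabilistic diameter bound or an explicit connectivity assumption, your claimed deterministic bound $\mathrm{diam}\,\hat{\feat}(\mathrm{supp}\,x) \lesssim \sqrt{\sigma^2_x/\kappa}$ cannot hold: if the support splits into two clusters with no augmentation overlap between them, the constraint does nothing across the gap and only the Lipschitz bound applies, which is why the paper's intermediate bound carries a $(1-\alpha)L\sigma^2_x$ term in the non-connected case.

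A secondary difference worth noting: the paper does not bound the diameter of the image at all. It works directly with the empirical variance proxy $\sigma^2_{\feat} = \frac{1}{2N^2}\sum_i\sum_j \nrm{\feat(x_i)-\feat(x_j)}$, decomposes the double sum into neighbour and non-neighbour pairs, and bounds each pairwise squared distance by a multiple of $\nrm{x_i - x_j}$ ($\delta$-contraction on edges, $D\delta$ along paths), so the factor $1/\kappa$ is inherited from $\sigma^2_x$ pairwise rather than via a global range bound. Your diameter-plus-Hoeffding route would, if the chaining step were closed, actually deliver a cleaner sub-Gaussianity statement than the paper's variance bookkeeping; but as written the proposal leaves the load-bearing step open and attributes the constants to the wrong quantities.
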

See Appendix \ref{sec:gen-theory_app} for proof. This result states that the Euclidean version of our invariance regularizer has the effect of contracting the within-class variance of the data. Figure \ref{fig:distances} shows that this holds in practise for the original version of our objective in \eqref{eq:loss}. This guarantees that the following generalization result from \citep{saunshi2019theoretical} holds. For brevity we state an informal version of the Theorem with details deferred to the original publication. 

\begin{theorem}[Generalization. Adapted from Lemma B.2.\ from \citep{saunshi2019theoretical}]
Let 
$L^{\mu}_{\text{sup}}(\feat)$
be the standard $(K+1)$-wise hinge loss of the linear classification function $W^{\mu}f$ whose $c^{th}$ column is  $\mu_x = \frac{1}{|\mathcal{C}_c|} \sum_{i\in \mathcal{C}_c} \feat(x_i)$ the mean of representations corresponding to class $c$. Further, let $L^{\mu}_{\gamma(f), \text{sup}}(\feat)$ use the the hinge loss with margin $\gamma (f) = 1 + c' M \sigma_f (\sqrt{k} + \sqrt{\log \frac{M}{\epsilon}})$ with $c'$ constant and $M = \max_x \Vert f(x) \Vert$.
If $\hat{\feat}$ is the minimizer of \eqref{eq:task_euclidean} and if Assumptions \textbf{(A1)} and \textbf{(A2)} hold then with high probability
\begin{equation}
L^{\mu}_{\text{sup}}(\hat{f}) \leq \gamma(f) L^{\mu}_{\gamma(f), \text{sup}}(f) + \text{Gen}_N + \epsilon
\label{eqn:blah}
\end{equation}
\end{theorem}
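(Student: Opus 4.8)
The plan is to obtain \eqref{eqn:blah} by feeding the intra-class concentration established in Lemma~\ref{lem:concentration} into the generalization framework of \citep{saunshi2019theoretical}; the only genuinely new ingredient is that what they \emph{postulate} (concentration of representations within a latent class) is here a \emph{consequence} of the invariance constraint in \eqref{eq:task_euclidean}. First I would make the correspondence between the two settings explicit: in the notation of \citep{saunshi2019theoretical}, a positive pair $\{x,x^+\}$ drawn from $\mathbb{E}_c\,\mathcal{D}_c(x)\mathcal{D}_c(x^+)$ and negatives $x^-_m$ from the marginal are played, respectively, by the augmented pair $\{x_i^{a_l},x_i^{a_k}\}$ and the contrast set $\{x_m^{a_k}\}_{m=1}^M$; with the identity critic and the logistic surrogate $\ell(v)=\log(1+\sum_m\exp(v_m))$, the data-fit term of \eqref{eq:task_euclidean} is exactly their empirical unsupervised loss, and for a single negative it coincides with the \relic{} objective of \eqref{eq:loss} with identity critic.

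Second, I would invoke Lemma~\ref{lem:concentration}: under \textbf{(A1)}--\textbf{(A2)} and $\rho\le B/(6L\kappa)$, $\hat{\feat}(x)$ conditioned on its latent class is sub-Gaussian with parameter $\sigma_f^2\le\sigma_x^2/\kappa$. This is precisely the hypothesis needed to control the deviation of $\hat{\feat}(x^+)$ from the class mean $\mu_c$, the $c^{th}$ column of the mean classifier $W^\mu f$. The supervised hinge loss $L^{\mu}_{\mathrm{sup}}$ is then compared to the unsupervised loss by replacing $\hat{\feat}(x^+)$ with $\mu_c$ inside $\ell$: the change incurred is bounded, via Cauchy--Schwarz, by $\|\hat{\feat}(x)\|\,\|\hat{\feat}(x^+)-\mu_c\|$, and a sub-Gaussian tail bound on $\|\hat{\feat}(x^+)-\mu_c\|$ (the norm of a $k$-dimensional sub-Gaussian vector), together with a union bound over the $M$ negatives, produces a deviation of order $M\sigma_f(\sqrt{k}+\sqrt{\log(M/\epsilon)})$. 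Absorbing this into the margin yields the hinge loss with margin $\gamma(f)=1+c'M\sigma_f(\sqrt{k}+\sqrt{\log(M/\epsilon)})$, which is exactly the term $\gamma(f)L^{\mu}_{\gamma(f),\mathrm{sup}}(f)$ on the right-hand side.

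Third, I would pass from the empirical unsupervised loss to its population counterpart. The gap $\mathrm{Gen}_N$ is the standard Rademacher-complexity deviation bound for the class $\mathcal{F}=\{\feat:\|\feat\|_2\le T\}$: here \textbf{(A1)} ($L$-Lipschitz) and \textbf{(A2)} (bounded inputs) supply the boundedness and Lipschitz structure needed to bound the Rademacher complexity of the composed loss, exactly as in \citep{saunshi2019theoretical}, and the residual $\epsilon$ collects the failure probabilities of the concentration and union-bound steps. Chaining the three inequalities---unsupervised-to-supervised via concentration, empirical-to-population via $\mathrm{Gen}_N$, and optimality of $\hat{\feat}$ for \eqref{eq:task_euclidean}---gives \eqref{eqn:blah}.

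The main obstacle is bookkeeping rather than a new idea: one must check that the precise notion of concentration used inside the proof of Lemma~B.2 of \citep{saunshi2019theoretical} is implied by the sub-Gaussian guarantee of Lemma~\ref{lem:concentration}, and track the constants through the margin computation so that $\gamma(f)$ emerges in the stated form; one must also verify that the regularity imposed by \textbf{(A1)}--\textbf{(A2)} and the norm bound $\|\feat\|_2\le T$ is compatible with the hypotheses of their theorem, in particular that the constraint in \eqref{eq:task_euclidean} being active does not conflict with the expressivity needed for the bound to be non-vacuous. Since the statement is explicitly an adapted, informal version, the remaining work is a careful substitution into an existing argument rather than fresh analysis.
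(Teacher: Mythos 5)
Your proposal matches the paper's route exactly: the paper gives no independent proof, instead deferring to Lemma~B.2 of \citep{saunshi2019theoretical} and observing that the only missing hypothesis there---intra-class sub-Gaussian concentration of $\feat(x)$---is supplied by Lemma~\ref{lem:concentration} under \textbf{(A1)}--\textbf{(A2)}. Your additional unpacking of how the concentration enters the margin $\gamma(f)$ and how $\mathrm{Gen}_N$ arises from Rademacher complexity is a faithful (and somewhat more detailed) account of the argument the paper imports wholesale from the original publication.
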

Here, $\text{Gen}_N$ is a standard generalization bound which depends on the Rademacher complexity of the function class $\mathcal{F}$ and the sample size, $N$. 

For all practical purposes, the final generalization result is identical to \citep{saunshi2019theoretical} stating that $\hat{f}$---which is learned by minimizing a contrastive objective on unlabelled data---performs well on labelled data. However, this crucially depends on the intraclass concentration of the representation, that $\feat(x)$ is sub-Gaussian with parameter $\sigma^2_{\feat}$. Whereas in \citep{saunshi2019theoretical} this was assumed to hold, our Lemma \ref{lem:concentration} shows that the necessary concentration is ensured by our invariance penalty. Experimentally we see this property holds in practise (figure \ref{fig:distances}).

\section{Additional Results} \label{sec:gen-theory_app}

\begin{proof}[Proof of Lemma \ref{lem:concentration}]
Assume the data $x$ is $\sigma^2_x$-sub-Gaussian. In practise this holds since $x$ is bounded. It immediately follows that $L$-Lipschitz function $\feat(x)$ sub-Gaussian with parameter at most $L$. Now we will characterize the reduction in variance from $x$ to $\feat$. Assume there is a ball of radius $B$ around each point such that for any augmentation $x_i^{s}$ of $x_i$ $\nrm{x_i - x_i^{s}} \leq B$. By assumption (A1) we have that $\nrm{\feat(x_i) - \feat(x_i^{s})}\leq \rho$. This implies that for points $x_i$ and $x_j$ such that $\nrm{x_i - x_j} \leq 2B$, there exists a region of overlap so that $\nrm{\feat(x_i) - \feat(x_j)}\leq \nrm{\feat(x_i) - \feat(x^s_i)} + \nrm{\feat(x^s_i) - \feat(x_j)} \leq 2\rho$. 

In practise this says that there are augmentations of $x_i$ which are sufficiently similar to augmentations of $x_j$ so that their representations should be similar, thereby driving $\feat(x_i)$ and $\feat(x_j)$ to be closer.

The variance of points in $\feat$ space is
\begin{align*}
\sigma^2_{\feat} = \frac{1}{2N^2} \sum_{i}\sum_{j} \nrm{\feat(x_i) - \feat(x_j)} 
\end{align*}
The overlap $B < \nrm{x_i - x_j} \leq 2B$ induces a graph where we say $j \in \mathcal{N}(i) ~ \forall ~ j ~ \text{s.t.} ~ \nrm{x_i - x_j} \leq 2B$.
For $N$ samples we can decompose the variance as
\begin{align*}
    \sigma^2_{\feat} & = \frac{1}{2N^2} \sum_{i}\sum_{j} \nrm{\feat(x_i) - \feat(x_j)} 
    \\
    & = \frac{1}{2N^2} \sum_{i} \sum_{j\in\mathcal{N}(i)} \nrm{\feat(x_i) - \feat(x_j)} + \sum_{j'\notin\mathcal{N}(i)} \nrm{\feat(x_i) - \feat(x_{j'})} 
\end{align*}
 By smoothness of $\feat$ we always have that have $\nrm{\feat(x_i) - \feat(x_{j'})}\leq L \nrm{x_i - x_{j'}}$. By the constraint we have that $\nrm{\feat(x_i) - \feat(x_j)}\leq \frac{2\rho L}{B} \nrm{x_i - x_j} ~ \forall j \in \mathcal{N}(i) $ and for $ \delta =  \frac{2\rho L}{B} < 1$.

\paragraph{Constant proportion overlap.}
Now, assuming that for each point $i$ there is a constant proportion of the points, $0 \leq \alpha \leq 1$ in the set $\mathcal{N}(i)$ $\forall i$ we can obtain the following inequality
\begin{align}
    \sigma^2_{\feat} & = \frac{1}{2N^2} \sum_{i}\sum_{j} \nrm{\feat(x_i) - \feat(x_j)} \nonumber
    \\
    & \leq  \alpha \delta \sigma^2_x + (1-\alpha)L \sigma^2_x \nonumber
    \\
    & = (\alpha \delta + (1-\alpha)L) \sigma^2_x
    \label{eq:const_prop}
\end{align}

For $\sigma^2_{\feat} \leq \sigma^2_x$ we require $ (\alpha \delta + (1-\alpha)L)\leq 1$. Since both terms are positive we separately require $(1-\alpha)L \leq 1$:
\begin{align*}
    (1-\alpha)L & < 1
    \\
    (1-\alpha) & < \frac{1}{L}
    \\
    \alpha & > (1-\frac{1}{L})
\end{align*}
This condition makes sense since the larger $\alpha$, the fewer unconnected components in the graph.
If the above holds, we also require $\alpha\frac{2\rho L}{B} < 1 - (1-\alpha) L$ to ensure the sum is bounded above by 1. This implies $\rho <\frac{(1- (1-\alpha)L)B}{2 L \alpha}$.

However, $\alpha$ is a property of the augmentation set and not directly a user-controllable parameter so if $\alpha$ is too small or the function is not smooth enough, it might not be possible to set $\rho$ in such a way to induce contraction in $\sigma^2_{\feat}$. 

In the next section we derive a tighter concentration based on the structure of random graphs which are induced by the connectivity between data points and their augmentations.

\paragraph{Random graphs.}
 Consider the graph $G(V,E)$ induced by the constraints $(i,j) \in E ~ \forall ~ \nrm{x_i - x_j}\leq 2B$. Call $\mathcal{N}(i)$ the set of neighbours of point $i$. For $N$ points, if there is a constant probability $\alpha$ that $j\in\mathcal{N}(i)$ then $G_{N,\alpha}$ is an Erd\"{o}s-Renyi graph. 
 
 From Theorem \ref{thm:connected}, if $\alpha \geq \frac{c \log N}{N}$ for $c>1$ then with high probability, there are \emph{no} unconnected components in $G$. That is, every vertex in V is reachable from any other vertex in a finite number of steps. We can then decompose the contribution to the variance in terms of components in the graph that are adjacent and those which are reachable within a certain number of steps.

Let the degree---the shortest path---between any two points be at most $D$ we obtain the following refinement of \eqref{eq:const_prop}
\begin{align*}
    \sigma^2_{\feat} & = \frac{1}{2N^2} \sum_{i}\sum_{j} \nrm{\feat(x_i) - \feat(x_j)} 
    \\
    & \leq  \alpha \delta \sigma^2_x + (1-\alpha) D \delta \sigma^2_x
\end{align*}
From Theorem \ref{thm:diam} we have with high probability that $3\leq D \leq 4$. So for $\sigma^2_{\feat} \leq \frac{1}{\kappa} \sigma^2_x$ with $\kappa\geq 1$ we require $\rho \leq \frac{B}{2L\kappa(\alpha + 3(1-\alpha) )}\leq \frac{B}{6L\kappa}$.
\end{proof}

\begin{theorem}[Connectedness \citep{erdHos1960evolution}\label{thm:connected}]
If $p=\frac{c \log n}{n}$ where $c>1$ with high probability then the graph $G(n, p)$ has no unconnected components.
\end{theorem}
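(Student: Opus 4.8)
The plan is to prove this by the first-moment (union bound) method, using the classical principle that near the connectivity threshold the only obstruction to connectivity of $G(n,p)$ is the existence of a \emph{small} component — and, in the limit, of an isolated vertex. Throughout I write $p = c\log n/n$ with $c>1$ and let $n\to\infty$. First I would note that a graph on $n$ vertices is disconnected if and only if it has a connected component of size $k$ for some $1\le k\le \lfloor n/2\rfloor$, so by a union bound $\sP[G(n,p)\text{ disconnected}] \le \sum_{k=1}^{\lfloor n/2\rfloor}\sP[\exists\text{ component of size }k]$. It then suffices to show this sum is $o(1)$, which I would do by handling $k=1$ separately from the range $2\le k\le n/2$.

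For $k=1$, the expected number of isolated vertices is $n(1-p)^{n-1}\le n e^{-p(n-1)} = (1+o(1))\,n^{1-c}$, which tends to $0$ because $c>1$; Markov's inequality then shows that with high probability there is no isolated vertex. This is the dominant term and the precise point where the hypothesis $c>1$ (and the sharpness of the constant $1$) is used — for $c<1$ isolated vertices persist and the graph is disconnected, so the bound is essentially tight.

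For $2\le k\le n/2$ I would bound $\sP[\exists\text{ component of size }k]\le \binom{n}{k}\,k^{k-2}\,p^{k-1}\,(1-p)^{k(n-k)}$: a component on a fixed $k$-set must contain one of the $k^{k-2}$ labelled spanning trees (Cayley's formula), each present with probability $p^{k-1}$, and no edge may leave that set, contributing $(1-p)^{k(n-k)}$. Using $\binom{n}{k}\le (en/k)^k$, $1-p\le e^{-p}$, and $n-k\ge n/2$ so that $(1-p)^{k(n-k)}\le n^{-ck/2}$, each summand simplifies (up to constants) to $\frac{n}{c\log n}\cdot\frac{1}{k^2}\cdot\bigl(e c\log n\cdot n^{-c/2}\bigr)^k$. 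Since the base $e c\log n\cdot n^{-c/2}\to 0$, summing the resulting geometric series over $k\ge 2$ gives a bound of order $(\log n)\,n^{1-c}\to 0$. Combining the two ranges yields $\sP[G(n,p)\text{ disconnected}]\to 0$, i.e. $G(n,p)$ has no unconnected component with high probability.

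The main obstacle is controlling the middle range of $k$: a union bound over vertex subsets \emph{alone} (i.e. without the spanning-tree factor $k^{k-2}p^{k-1}$) does not decay, since $\binom{n}{k}(1-p)^{k(n-k)}$ can exceed $1$ for small $k$; one genuinely has to pay for the component being connected, and then track the polylogarithmic factors carefully enough to see that the geometric decay in $n^{-c/2}$ wins uniformly in $k$. A slightly different but equivalent route is to rule out a cut $(S,V\setminus S)$ with no crossing edges, splitting into $|S|=O(1)$ (handled as in the isolated-vertex computation) and $|S|=\omega(1)$ (handled by the crude estimate $\binom{n}{k}(1-p)^{k(n-k)}\le 2^n e^{-pk(n-k)}$, which is super-polynomially small once $k(n-k)$ is large); I would use whichever bookkeeping is shorter. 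Since this is the classical Erdős–Rényi theorem, one could alternatively simply cite it, but the argument above is short enough to include in full.
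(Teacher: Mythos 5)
The paper itself offers no proof of Theorem~\ref{thm:connected}: it is quoted as a known result of \citep{erdHos1960evolution} and used as a black box inside the proof of Lemma~\ref{lem:concentration}, so there is no internal argument to compare yours against. What you give is a correct, self-contained proof of the cited fact by the standard first-moment method: union bound over component sizes $1\le k\le \lfloor n/2\rfloor$; the $k=1$ term controlled by $n(1-p)^{n-1}\le n\,e^{-p(n-1)}=n^{1-c+o(1)}\to 0$, which is indeed where $c>1$ enters and where the bound is tight; and the range $2\le k\le n/2$ controlled by the Cayley-formula bound $\binom{n}{k}k^{k-2}p^{k-1}(1-p)^{k(n-k)}$, which after $\binom{n}{k}\le(en/k)^k$, $1-p\le e^{-p}$ and $n-k\ge n/2$ sums to $O\bigl(\log n\cdot n^{1-c}\bigr)=o(1)$. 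This is the textbook proof of the $c>1$ half of the Erd\H{o}s--R\'enyi connectivity theorem; citing it (as the paper does) and proving it (as you do) are both legitimate, and your version buys self-containedness at the cost of half a page.

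Two side remarks in your closing paragraph are slightly off, though neither affects the proof you actually give. First, the claim that the subset-only union bound fails because ``$\binom{n}{k}(1-p)^{k(n-k)}$ can exceed $1$ for small $k$'' is not right for $c>1$: for fixed $k$ that quantity is of order $n^{k(1-c)}/k!\to 0$, and one can check it stays below $1$ uniformly over $2\le k\le n/2$ for large $n$. What actually blows up for small $k$ (when $c<2$) is the bound obtained \emph{after} the lossy substitution $(1-p)^{k(n-k)}\le n^{-ck/2}$, which throws away the fact that $p(n-k)\approx c\log n$ when $k=o(n)$; the spanning-tree factor is a convenience that makes the bookkeeping uniform in $k$, not a necessity. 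Second, in the alternative cut-based route you sketch, the crude estimate $\binom{n}{k}(1-p)^{k(n-k)}\le 2^n e^{-pk(n-k)}$ is only superpolynomially small once $pk(n-k)\gg n$, i.e.\ $k\gtrsim n/\log n$, so it does not cover all $|S|=\omega(1)$ and the intermediate range would still need the sharper $(en/k)^k$ accounting. Since that route is offered only as an optional shortcut, the main argument stands as written.
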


\begin{definition}[Diameter]
For a connected graph, $G(V, E)$ the diameter $\text{diam}(G) =  \max \text{dist}(v_i, v_j)$ where $\text{dist}(v_i, v_j)$ is the minimum number of edges in the path between $v_i$ and $v_j$.
\end{definition}

\begin{theorem}[Diameter of random graphs \citep{frieze2016introduction}\label{thm:diam}]
Let $d\geq2$ be a fixed positive integer. For $c>0$ and
$$
p^d n^{d-1} = \log(n^2/c)
$$
Then $\text{diam}(G_{n,p})\geq d$ with probability $\exp(-c/2)$ and $\text{diam}(G_{n,p}) \leq d+1$ with probability $1-\exp(-c/2)$.
\end{theorem}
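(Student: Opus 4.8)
The plan is the classical two-point concentration argument for the diameter of a random graph. Under the hypothesis $p^d n^{d-1}=\log(n^2/c)$ one has $p\to 0$ and $np = n^{1/d}\bigl(\log(n^2/c)\bigr)^{1/d}\to\infty$, and both facts will be used repeatedly. I would introduce the counting random variable $Z$ equal to the number of unordered vertex pairs $\{u,v\}$ with $\mathrm{dist}(u,v)\ge d+1$, and exploit the exact identities $\{\mathrm{diam}(G_{n,p})\le d\}=\{Z=0\}$ and $\{\mathrm{diam}(G_{n,p})\ge d+1\}=\{Z\ge 1\}$. The goal is then to show that $Z$ converges in distribution to a $\mathrm{Poisson}(c/2)$ law, so that $\Pr[Z=0]\to e^{-c/2}$; together with the fact, proved at the end, that $d\le \mathrm{diam}(G_{n,p})\le d+1$ with high probability, this yields that $\mathrm{diam}(G_{n,p})$ equals $d$ with limiting probability $e^{-c/2}$ and equals $d+1$ with limiting probability $1-e^{-c/2}$ -- the two probabilities in the statement.

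First I would compute $\E[Z]$. By vertex-transitivity, $\E[Z]=\binom{n}{2}\,\Pr[\mathrm{dist}(u,v)\ge d+1]$ for a fixed pair $\{u,v\}$, so it suffices to prove $\Pr[\mathrm{dist}(u,v)\ge d+1]=(1+o(1))\,c/n^2$. Writing $N_\ell$ for the number of $u$–$v$ paths of length exactly $\ell$, one has $\E[N_\ell]=(n-2)(n-3)\cdots(n-\ell)\,p^\ell=(1+o(1))\,(np)^{\ell-1}p$, so these expectations grow geometrically in $\ell$ with ratio $np\to\infty$; in particular $\E[N_d]=(1+o(1))\,n^{d-1}p^d=(1+o(1))\log(n^2/c)$ dominates $\E[N_\ell]$ for $\ell<d$, which is $o(1)$. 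Granting the Poisson approximation that $N_{\le d}:=\sum_{\ell\le d}N_\ell$ converges in distribution to $\mathrm{Poisson}\bigl(\log(n^2/c)\bigr)$, we get
\[
\Pr[\mathrm{dist}(u,v)\ge d+1]=\Pr[N_{\le d}=0]=(1+o(1))\,e^{-\log(n^2/c)}=(1+o(1))\,\frac{c}{n^2},
\]
and hence $\E[Z]=(1+o(1))\,\binom{n}{2}\,c/n^2\to c/2$.

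The two Poisson limits -- for $N_{\le d}$ (short paths joining a fixed pair) and for $Z$ (far pairs) -- are the heart of the proof, and I would establish both by the method of factorial moments, i.e.\ by showing $\E[(N_{\le d})_k]\to(\E N_{\le d})^k$ and $\E[(Z)_k]\to(c/2)^k$ for each fixed $k$, which forces convergence to the claimed Poisson laws. In each case the factorial moment expands as a sum over $k$-tuples of distinct combinatorial objects -- ordered $k$-tuples of distinct $u$–$v$ paths of length $d$ in the first case, ordered $k$-tuples of distinct vertex pairs in the second -- whose main term comes from the configurations that are as disjoint as possible (internally vertex-disjoint path systems; endpoint pairs on $2k$ distinct vertices whose short-path witnesses live on disjoint vertex sets), contributing $(1+o(1))$ times $(\E N_{\le d})^k$, respectively $(c/2)^k$. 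The main obstacle is showing that every other configuration is negligible: each coincidence of internal vertices between two paths removes a free vertex and costs a factor $O(1/n)$, each forced coincidence of an edge costs a factor $O(1/(np))$ -- this is exactly where $np\to\infty$ is used -- and since for fixed $k$ there are only boundedly many shapes of overlap, the total overlap contribution is of smaller order; in the $Z$-computation one also checks that the $O(n^{2k-1})$ pair-tuples sharing a vertex contribute only $O(1/n)$. (One could equivalently package this step through the Stein–Chen method, bounding the total-variation distance to a Poisson by the same overlap sums.)

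Finally I would verify the sandwich $d\le\mathrm{diam}(G_{n,p})\le d+1$ with high probability. For the upper bound, the expected number of vertex pairs at distance $\ge d+2$ equals $\binom{n}{2}\,\Pr[N_{\le d+1}=0]=(1+o(1))\,\binom{n}{2}\,e^{-\E N_{d+1}}$, and since $\E N_{d+1}=(1+o(1))\,(np)\log(n^2/c)$ grows like a positive power of $n$, this expectation is $n^{2}e^{-n^{\Omega(1)}}\to 0$; by Markov no such pair exists with high probability, i.e.\ $\mathrm{diam}\le d+1$. For the lower bound, $\Pr[\mathrm{diam}\le d-1]\le\Pr[\mathrm{dist}(u,v)\le d-1]\le\E[N_{\le d-1}]=(1+o(1))\,(np)^{-1}\log(n^2/c)\to 0$, so $\mathrm{diam}\ge d$ with high probability. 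Combining everything, $\mathrm{diam}(G_{n,p})\in\{d,d+1\}$ with high probability, with the split between the two values governed by $\Pr[Z=0]\to e^{-c/2}$, which is the assertion of the theorem.
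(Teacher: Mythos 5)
This theorem is not proved in the paper at all: it is imported verbatim from Frieze and Kar\'onski's textbook and used as a black box in the proof of Lemma~\ref{lem:concentration}, so there is no internal argument to compare yours against. For reference, the textbook proof estimates $\Pr[\mathrm{dist}(u,v)>d]$ by controlling the growth of BFS neighbourhoods ($|N_k(v)|\approx (np)^k$ with high probability) and then bounding the probability of no edge between two large boundary sets, rather than by counting $u$--$v$ paths; your overall architecture (two-point concentration, $Z=$ number of far pairs, $Z\Rightarrow\mathrm{Poisson}(c/2)$, plus a whp sandwich $d\le\mathrm{diam}\le d+1$) is the classical Bollob\'as-style route and is a legitimate alternative skeleton.

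However, there is a genuine gap at the central step. You need the multiplicative estimate $\Pr[\mathrm{dist}(u,v)>d]=(1+o(1))\,c/n^{2}$, i.e.\ $\Pr[N_{\le d}=0]=(1+o(1))e^{-\lambda_n}$ with $\lambda_n=\mathbb{E}[N_{\le d}]=\log(n^{2}/c)\to\infty$, and you propose to get it from ``$N_{\le d}$ converges to $\mathrm{Poisson}(\lambda_n)$'' established by convergence of factorial moments of each fixed order $k$. That tool cannot deliver this conclusion: the target probability is $\Theta(n^{-2})$, so you would need the distributional approximation to be accurate to $o(n^{-2})$ at the atom $\{0\}$ (equivalently, the exponent $\lambda_n\approx 2\log n$ accurate to additive $o(1)$), whereas fixed-order moment convergence, or a Stein--Chen total-variation bound of the usual overlap-sum form, only gives errors that are polynomially larger than $n^{-2}$; moreover ``convergence in distribution to $\mathrm{Poisson}(\lambda_n)$'' with $\lambda_n\to\infty$ is not even a well-posed limit without this quantification. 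The standard repair is to prove the two bounds separately: Harris/FKG gives $\Pr[N_{\le d}=0]\ge\prod_{\pi}(1-p^{|\pi|})=\exp\bigl(-\lambda_n-O(\lambda_n p^{d})\bigr)$ with $\lambda_n p^{d}=O((\log n)^{2}/n)=o(1)$, and Janson's inequality gives $\Pr[N_{\le d}=0]\le\exp(-\lambda_n+\Delta/2)$ with the overlap term $\Delta=o(1)$ because every shared edge costs a factor $1/(np)=o(1)$; alternatively, use the neighbourhood-expansion argument of the cited book. The same issue infects your upper bound ``$\mathrm{diam}\le d+1$ whp'' as written (it again invokes $\Pr[N_{\le d+1}=0]\approx e^{-\mathbb{E}N_{d+1}}$), although there a one-sided Janson bound suffices. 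The remaining pieces --- $\mathbb{E}[Z]\to c/2$ once the pair estimate is in hand, the factorial-moment (or Stein--Chen) argument for $Z\Rightarrow\mathrm{Poisson}(c/2)$, and the Markov bound for $\mathrm{diam}\ge d$ --- are sound.
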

\section{Generalizing Contrastive Learning} \label{sec:refinements_app}

\subsection{Refinements}

On the unsupervised observed data $\mathcal{D}$, any task as defined by targets $Y_{t}$ induces an equivalence relation, i.e. $Y_{t}$ partitions $\mathcal{D}$ into equivalence classes. 
It divides $\mathcal{D}$ based on values of the target, $\mathcal{D} = \{ \{ x_a \vert y_a=y_i\}_{i=1}^{M}\}$ where $\{ y_{1}, \dots, y_{M}\}$ for some $M$ is the set of target values. Here the equivalence relation associates datapoints based on the value of the target they predict. 
For example, if $\mathcal{D}$ is a set of images of cats and dogs and $Y_{t}$ denotes labels cat and dog, then $\mathcal{D}$ is partitioned into two equivalence classes corresponding to cat and dog images by $Y_{t}$. 

Intuitively, a refinement is a subdivision of an existing partition. 
For a visualization of a refinement of a set of tasks see Figure \ref{fig:refinement}.
To mathematically define refinements, we first need to introduce what it means for an equivalence relation to be finer than another equivalence relation.
\begin{definition}{\bf (Fineness).}
Let $\sim$ and $\approx$ be two equivalence relations on the set $\mathcal{D}$. If every equivalence class of $\sim$ is a subset of an equivalence class of $\approx$, we say that $\sim$ is \textit{finer} than $\approx$.
\end{definition}

Now we define what refinements.
\begin{definition}{\bf (Refinement).}
    Let $A, B$ be sets of equivalence classes induced by equivalence relations $\sim$ and $\approx$ over the set $\mathcal{D}$.
    If $\sim$ is finer than $\approx$, then we call $A$ a \textit{refinement} of B. 
\end{definition}

Furthermore, we can relate the corresponding sets of equivalence classes.
\begin{lemma}
    Let $\sim$ and $\approx$ be two equivalence relationships on the set $\mathcal{D}$ and denote the corresponding induced partitions by $A$ and $B$. If $\sim$ is finer than $\approx$, then every equivalence class of $\approx$ is a union of equivalence classes of $\sim$.
\end{lemma}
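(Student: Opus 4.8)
The plan is to prove, for each $\approx$-equivalence class $[b]_{\approx}\in B$, the set equality $[b]_{\approx}=\bigcup_{x\in[b]_{\approx}}[x]_{\sim}$, where $[x]_{\sim}$ denotes the $\sim$-class of $x$. This directly exhibits every class of $\approx$ as a union of classes of $\sim$, which is exactly the assertion of the lemma.

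First I would fix an arbitrary $b\in\mathcal{D}$ and consider its $\approx$-class $[b]_{\approx}$. The inclusion $[b]_{\approx}\subseteq\bigcup_{x\in[b]_{\approx}}[x]_{\sim}$ is immediate, since every $x\in[b]_{\approx}$ lies in its own $\sim$-class $[x]_{\sim}$, and that class is one of the sets in the union. For the reverse inclusion, take any $x\in[b]_{\approx}$. Because $\sim$ is finer than $\approx$, the block $[x]_{\sim}$ is contained in some $\approx$-class; but $x$ belongs to both $[x]_{\sim}$ and $[b]_{\approx}$, and since the $\approx$-classes form a partition of $\mathcal{D}$ the $\approx$-class containing $[x]_{\sim}$ must be $[b]_{\approx}$ itself. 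Hence $[x]_{\sim}\subseteq[b]_{\approx}$, and taking the union over all $x\in[b]_{\approx}$ gives $\bigcup_{x\in[b]_{\approx}}[x]_{\sim}\subseteq[b]_{\approx}$. Combining the two inclusions yields the desired equality.

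To finish, I would observe that the sets appearing in this union are genuine, \emph{full} equivalence classes of $\sim$ rather than merely their intersections with $[b]_{\approx}$, and that distinct ones among them are disjoint since they are blocks of the partition $A$; thus $[b]_{\approx}$ is a disjoint union of blocks of $A$. As $b$ was arbitrary, every class of $\approx$ is a union of classes of $\sim$. I do not expect a genuine obstacle: the only step needing a moment's care is the argument that the $\approx$-class containing $[x]_{\sim}$ equals $[b]_{\approx}$, which is precisely where the partition property of $\approx$ (equivalently, symmetry and transitivity) is invoked.
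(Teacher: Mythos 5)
Your proof is correct and complete; note that the paper itself states this lemma without giving any proof, so there is nothing to diverge from, and your argument is the canonical one. The one step requiring care --- that the $\approx$-class containing $[x]_{\sim}$ must coincide with $[b]_{\approx}$ because both contain the point $x$ and the $\approx$-classes are pairwise disjoint --- is handled correctly and is precisely where the fineness hypothesis is used.
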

Coming back to the example of cats and dogs, let $\approx$ be the relation that associates cats with cats and dogs with dogs. Now the relation $\sim$ which associated both cats and dogs with their specific breed (e.g. poodles with other poodles) is finer than $\approx$. Note that $\sim$ partitions $\mathcal{D}$ into breeds and so we can easily generate the sets of cats and dogs (i.e. equivalence classes of $\approx$) by taking a union over all the corresponding breeds.

\begin{figure}
    \begin{center}
    \includegraphics[width=0.6\linewidth]{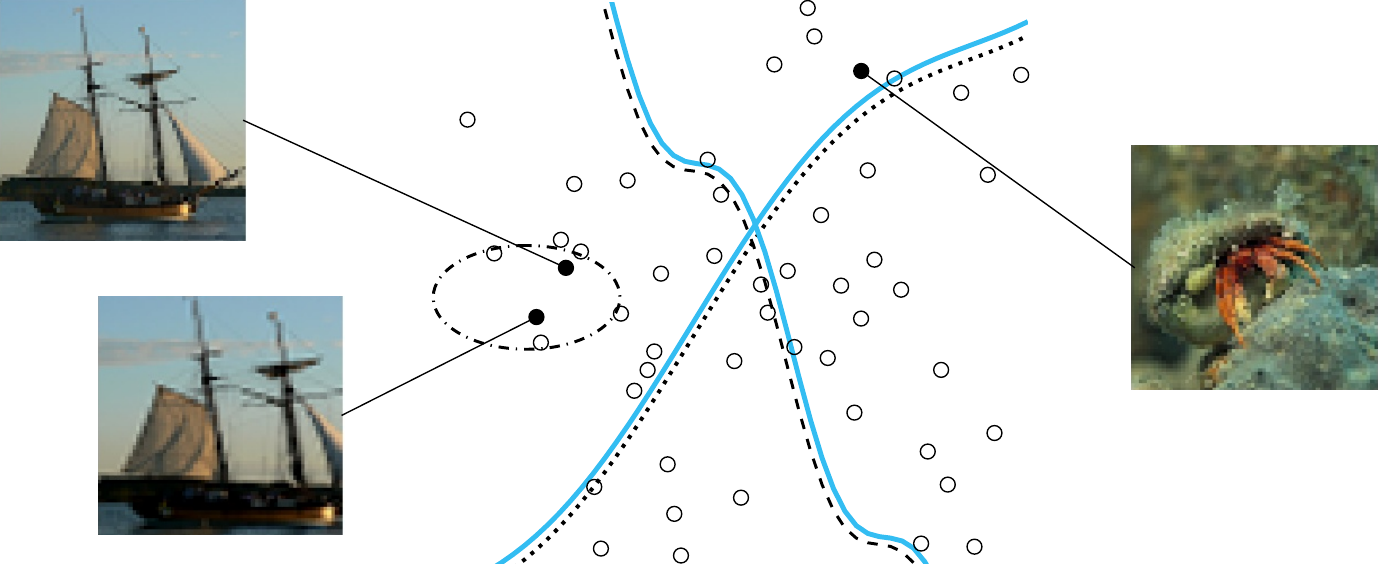}
    \end{center}
    \caption{Visualization of a refinement of a set of tasks. The tasks are to classify aquatic vs non-aquatic life and animal vs non-animal with the individual class boundaries denoted by the dashed and dotted black lines. A refinement for these tasks is a to classify aquatic animal vs aquatic non-animal vs non-aquatic animal vs non-aquatic non-animal and the class boundaries are given in teal. The ellipse indicates the set of points induced by augmenting the image of the ship.}
    \label{fig:refinement}
\end{figure}

\subsection{Proof of Theorem \ref{thm.condition}}

\begin{definition}{\bf (Invariant Representation).}
Let $X$ and $Y$ be the covariates and target, respectively. We call $f(X)$ an invariant representation for $Y$ under style $S$ if
\begin{equation}
    p^{do(S = s_{i})}(Y\,\vert\,f(X)) = p^{do(S = s_{j})}(Y\,\vert\,f(X)) \quad \forall \; s_{i}, s_{j}\in\mathcal{S}, \label{app.eq.stability}
\end{equation}
where $do(S=s)$ denotes assigning $S$ the value $s$ and $\mathcal{S}$ is the domain of $S$.
\end{definition}

\begin{oneshot}{\bf Theorem~\ref{thm.condition}.}
    Let $\mathcal{Y}=\{Y_t\}_{t=1}^{T}$ be a family of downstream tasks. Let $Y^{R}$ be a refinement for all tasks in $\mathcal{Y}$. 
    If $\feat(X)$ is an invariant representation for $Y^{R}$ under changes in style $S$, then $\feat(X)$ is an invariant representation for all tasks in $\mathcal{Y}$ under changes in style $S$, i.e.
    \begin{equation}
    p^{do(s_{i})}(Y^{R}\,\vert\,\feat(X)) = p^{do(s_{j})}(Y^{R}\,\vert\,\feat(X)) \quad 
    \Rightarrow \quad 
    p^{do(s_{i})}(Y_{t}\,\vert\,\feat(X)) = p^{do(s_{j})}(Y_{t}\,\vert\,\feat(X))
    \end{equation}
    for all $ t \in\{1, \dots, T\}$ and for all $s_{i}, s_{j}\in\mathcal{S}$ with $p^{do(s_{i})}=p^{do(S = s_{i})}$. 
Thus, $\feat(X)$ is a representation that generalizes to $\mathcal{Y}$.
\end{oneshot}

{\bf Proof.}
Let $t \in\{1, \dots, T\}$. We have
\begin{align*}
   p^{\text do(s_{i})}(Y_t \vert f(X)) 
   & = \int p^{\text do(s_{i})}(Y_t \vert Y^{R}) p^{\text do(s_{i})}(Y^{R}\vert f(X)) d\,Y^{R}
    = \int p(Y_t \vert Y^{R}) p^{\text do(s_{i})}(Y^{R}\vert f(X)) d\,Y^{R} \\
   & = \int p(Y_t \vert Y^{R}) p^{do(s_{j})}(Y^{R}\vert f(X)) d\,Y^{R} 
   = p^{do(s_{j})}(Y_t \vert f(X)).
\end{align*}
For the second and last equality, we used that the mechanism of $Y_t\vert Y^{R}$ is independent of $S$, i.e. $p^{\text do(s_{i})}(Y_t \vert Y^{R}) = p^{do(s_{j})}(Y_t \vert Y^{R})$. 
The third equality follows from the assumption that $\feat(X)$ is an invariant representation for $Y^{R}$ under changes in $S$. 
Thus, we get that $\feat(X)$ is an invariant representation for $Y_t$ under changes in $S$. Specifically, for a representation to be an invariant representation for $Y_t$ it is a \emph{sufficient condition} for it to be an invariant representation for $Y^{R}$. \qed
\section{Experimental Details} \label{sec:experiments_app}

\subsection{Image Augmentations} \label{sec:aug_app}
For pretraining the representations in \relic{}, we apply the augmentation scheme proposed in SimCLR \citep{chen2020simple} and used in \citep{grill2020bootstrap}.
This consists of the following augmentations applied in the order they are listed
\begin{itemize}
    \item random crop -- we randomly crop the image using an area randomly selected between $8\%$ and $100\%$ of the image with an logarithmically sampled aspect ration between $3/4$ and $4/3$. After this, we resize the patch to $224\times224$;
    \item random horizontal flip;
    \item color jittering -- we apply in random order perturbations to brightness, contrast, saturation and hue of the image by shifting them by a random uniform offset;
    \item grayscale -- we randomly apply grayscaling;
    \item Gaussian blurring -- we blur the image using a $23\times23$ square Gaussian kernel with standard deviation uniformly sampled in $[0.1, 0.2]$;
    \item solarization -- we transform all the pixels with $x\rightarrow x* 1_{\{x<0.5\}} + (1-x) * 1_{\{x\geq0.5\}}$.
\end{itemize}
We use the same parameters for the augmentations and probabilities of applying individual augmentations as SimCLR \citep{chen2020simple}.
After applying augmentations, we normalize the images with the mean and standard deviation computed on ImageNet across the color channels.

\subsection{Architecture}
We test \relic{} on two different architectures -- ResNet-50 \citep{he2016deep} and ResNet-50 with target network as in \citep{grill2020bootstrap}.
For ResNet-50, we use version 1 with post-activation.
We take the representation to be the output of the final average pooling layer, which is of dimension $2048$.
As in SimCLR \citep{chen2020simple}, we use a critic network to project the representation to a lower dimensional space with a multi-layer perceptron (MLP).
When using ResNet-50 as encoder, we treat the parameters of the MLP (e.g. depth and width) as hyperparameters and sweep over them.
This MLP has batch normalization \citep{Ioffe2015BatchNA} after every layer, rectified linear activations (ReLU) \citep{Nair2010RectifiedLU}.
We used a 4 layer MLP with widths $[4096, 2048, 1024, 512]$ and output size $128$ with ResNet-50.
When using a ResNet-50 with target networks as in \citep{grill2020bootstrap}, we exactly follow their architecture settings.

\subsection{Optimization}
We use a batch size of 4096 and the LARS optimizer \citep{you2017large} with a cosine decay learning rate schedule \citep{Loshchilov2017SGDRSG} for $1000$ epochs with $10$ epochs for warm-up. 
We exclude the biases and batch normalization parameters from LARS adaptation.
We use as the base learning rate $0.3$ for ResNet-50 and $0.2$ for ResNet-50 with target network.
We scale this learning rate by batch size$/256$ and use a global weight decay parameter of
$1.5*10^{-6}$ and exclude the biases and batch normalization parameters.
For the target network, we follow the approach of BYOL \citep{grill2020bootstrap} and start the exponential moving average parameter $\tau$ at $\tau_{base} = 0.996$ and increase it to one during training via $\tau = 1 - (1 - \tau_{base})(\cos(\pi k/K) + 1)/2$ with k the current training
step and K the maximum number of training steps.
 
\subsection{Evaluation on ImageNet} \label{sec:linear_imagenet_app}

We follow the standard linear evaluation protocol on ImageNet as in \citep{Kolesnikov2019RevisitingSV,chen2020simple,grill2020bootstrap}.
We train a linear classifier on top of the fixed representation, i.e. we do not update the network parameters or the batch statistics.
For training, we randomly crop and resize images to $224\times224$, and randomly horizontally flip the images after that.
For testing, the images are resized to $256$ pixels along the shorter dimension with bicubic resampling after which we take a center crop of size $224\times224$.
Both for training and testing, the images are normalized by substracting the mean and standard deviations across the color channels computed on ImageNet after the augmentations.
We use Stochastic Gradient Descent with a Nestorov momentum of $0.9$ and train for $80$ epochs with a batch size of $1024$.
We do not use any regularization techniques, e.g. weight decay.

\subsection{Robustness and Generalization}  \label{sec:robust_app}

\subsubsection{Dataset Details} \label{sec:robust_app.data}

\paragraph{ImageNet-C.} 
The ImageNet-C dataset \citep{hendrycks2019robustness} consists of $15$ different types of corruptions
from the noise, blur, weather, and digital categories applied to the validation images of ImageNet.
This dataset is used for measuring semantic robustness.
Figure \ref{corruption_types} visualizes the corruption types.
Each type of corruption has $5$ levels of severity, i.e. there are $75$ distinct corruptions in the dataset.
In Figure \ref{severities}, we display the Impulse noise corruption for $5$ different severity levels. 
As can be seen, with increasing severity level the image becomes increasingly corrupted and difficult to parse.
In addition to these $75$ corruption types, there are an additional $4$ corruption types (speckle noise, gaussian blur, spatter and saturate) that are provided as a validation set.
We use these additional corruption types for selecting the best hyperparameters.
For further details on this dataset, please refer to \citep{hendrycks2019robustness}.

\begin{figure}[ht]
\centering
\begin{tikzpicture}[picture format/.style={inner sep=2pt,}]
\node[picture format]                   (A0)               {\includegraphics[width=0.18\textwidth]{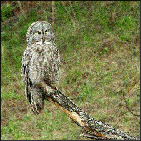}};
  \node[picture format,anchor=north west]   (A1) at (A0.north east)      {\includegraphics[width=0.18\textwidth]{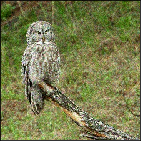}};
  \node[picture format,anchor=north west]      (A2) at (A1.north east) {\includegraphics[width=0.18\textwidth]{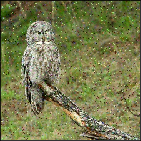}};
  \node[picture format,anchor=north west] (A3) at (A2.north east) {\includegraphics[width=0.18\textwidth]{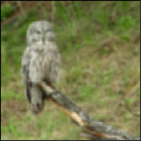}};
  \node[picture format,anchor=north west]      (A4) at (A3.north east)      {\includegraphics[width=0.18\textwidth]{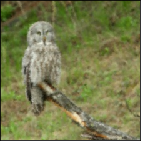}};

  \node[anchor=south] (C0) at (A0.north) {Gaussian Noise};
  \node[anchor=south] (C1) at (A1.north) {Shot Noise};
  \node[anchor=south] (C2) at (A2.north) {Impulse Noise};
  \node[anchor=south] (C3) at (A3.north) {Defocus Blur};
  \node[anchor=south] (C4) at (A4.north) {Frosted Glass Blur};

  \node[anchor=north] (C5) at (A0.south) {Motion Blur};
  \node[anchor=north] (C6) at (A1.south) {Zoom Blur};
  \node[anchor=north] (C7) at (A2.south) {Snow};
  \node[anchor=north] (C8) at (A3.south) {Frost};
  \node[anchor=north] (C9) at (A4.south) {Fog};

    \node[picture format,anchor=north] (A5) at (C5.south)
  {\includegraphics[width=0.18\textwidth]{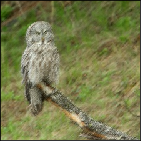}};
  \node[picture format,anchor=north]   (A6) at (C6.south)      {\includegraphics[width=0.18\textwidth]{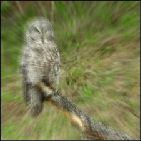}};
  \node[picture format,anchor=north]    (A7) at (C7.south) {\includegraphics[width=0.18\textwidth]{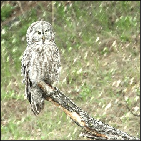}};
  \node[picture format,anchor=north] (A8) at (C8.south) {\includegraphics[width=0.18\textwidth]{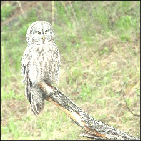}};
  \node[picture format,anchor=north] (A9) at (C9.south)      {\includegraphics[width=0.18\textwidth]{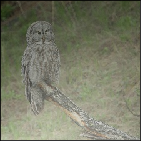}};

  \node[anchor=north] (C10) at (A5.south) {Brightness};
  \node[anchor=north] (C11) at (A6.south) {Contrast};
  \node[anchor=north] (C12) at (A7.south) {Elastic};
  \node[anchor=north] (C13) at (A8.south) {Pixelate};
  \node[anchor=north] (C14) at (A9.south) {JPEG};

   \node[picture format,anchor=north] (A10) at (C10.south)
  {\includegraphics[width=0.18\textwidth]{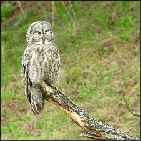}};
  \node[picture format,anchor=north]   (A11) at (C11.south)      {\includegraphics[width=0.18\textwidth]{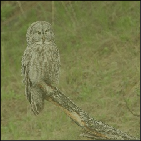}};
  \node[picture format,anchor=north]      (A12) at (C12.south) {\includegraphics[width=0.18\textwidth]{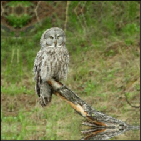}};
  \node[picture format,anchor=north] (A13) at (C13.south) {\includegraphics[width=0.18\textwidth]{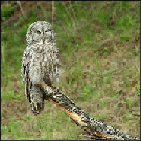}};
  \node[picture format,anchor=north]      (A14) at (C14.south)      {\includegraphics[width=0.18\textwidth]{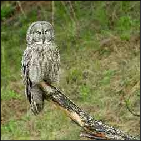}};

\end{tikzpicture}
\caption{The ImageNet-C dataset consists of 15 types of corruptions from noise, blur, weather, and digital categories. Each type of corruption has five levels of severity,
resulting in 75 distinct corruptions. See different severity levels in Figure \ref{severities}.}
\label{corruption_types}
\end{figure}

\begin{figure}
\centering
\begin{tikzpicture}[picture format/.style={inner sep=2pt,}]

\node[picture format]                   (A0)               {\includegraphics[width=0.15\textwidth]{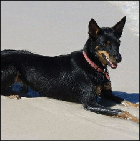}};
  \node[picture format,anchor=north west]   (A1) at (A0.north east)      {\includegraphics[width=0.15\textwidth]{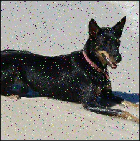}};
  \node[picture format,anchor=north west]      (A2) at (A1.north east) {\includegraphics[width=0.15\textwidth]{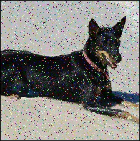}};
  \node[picture format,anchor=north west] (A3) at (A2.north east) {\includegraphics[width=0.15\textwidth]{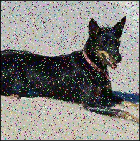}};
  \node[picture format,anchor=north west]      (A4) at (A3.north east)      {\includegraphics[width=0.15\textwidth]{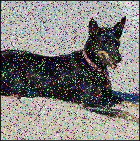}};
    \node[picture format,anchor=north west]      (A5) at (A4.north east)      {\includegraphics[width=0.15\textwidth]{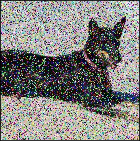}};

  \node[anchor=south] (C0) at (A0.north) {Clean};
  \node[anchor=south] (C1) at (A1.north) {Severity = 1};
  \node[anchor=south] (C2) at (A2.north) {Severity = 2};
  \node[anchor=south] (C3) at (A3.north) {Severity = 3};
  \node[anchor=south] (C4) at (A4.north) {Severity = 4};
  \node[anchor=south] (C5) at (A5.north) {Severity = 5};
\end{tikzpicture}
\caption{The $5$ different levels of severity of Impulse noise corruption available in the ImageNet-C dataset. With increasing severity the dog image is markedly corrupted.}
\label{severities}
\end{figure}

\paragraph{ImageNet-R.} 
The ImageNet-R dataset \citep{hendrycks2020many} consists of $30,000$ images depicting various
artistic renditions (e.g., paintings, sculpture, origami, cartoon) of $200$ ImageNet object classes. 
This dataset is used to measure out-of-distribution generalization to various abstract visual renditions as it emphasizes shape over texture.
The data was collected primarily from Flickr and also includes line drawings from \citep{Wang2019LearningRG}.
The images represent naturally occurring objects and have different textures and local image statistic to those of ImageNet. Figure \ref{imagenet_r} visualizes different images from the dataset.
For further details on this dataset, please refer to \citep{hendrycks2020many}.

\begin{figure}
\centering
\begin{tikzpicture}[picture format/.style={inner sep=2pt,}]

\node[picture format]                   (A0)               {\includegraphics[width=0.15\textwidth]{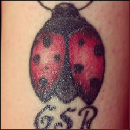}};
  \node[picture format,anchor=north west]   (A1) at (A0.north east)      {\includegraphics[width=0.15\textwidth]{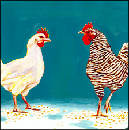}};
  \node[picture format,anchor=north west]      (A2) at (A1.north east) {\includegraphics[width=0.15\textwidth]{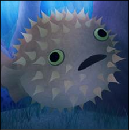}};
  \node[picture format,anchor=north west] (A3) at (A2.north east) {\includegraphics[width=0.15\textwidth]{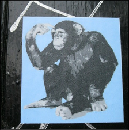}};
  \node[picture format,anchor=north west]      (A4) at (A3.north east)      {\includegraphics[width=0.15\textwidth]{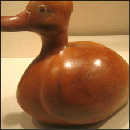}};
\end{tikzpicture}
\caption{Example images from the dataset ImageNet-R which contains $30,000$ images of $200$ ImageNet classes. 
This dataset emphasizes shape over texture and has different textures and local image statistic to those of ImageNet.}
\label{imagenet_r}
\end{figure}

\subsubsection{Evaluation} \label{sec:robust_app.eval}
To evaluate robustness and generalization of the learned representation, we follow the standard linear evaluation protocol on ImageNet as in \citep{Chen2020ImprovedBW, chen2020simple, Kolesnikov2019RevisitingSV}.
We train a linear classifier on top of the frozen representation, i.e. we do not update either the network parameters nor the batch statistics. 
During training, we augment the data by randomly cropping, resizing to $224 \times 224$ and randomly flipping the image. 
At test time, images are resized to 256 pixels along the shorter side via bicubic resampling and we take a $224\times 224$ center crop. 
Both during training and testing, after applying augmentations we normalize the color channels by subtracting the average color and dividing by the standard deviation that is computed on ImageNet.
We optimize the cross-entropy loss using Stochastic Gradient Descent with Nestorov momentum of $0.9$.
We sweep over number for epochs $\{30, 50, 60, 90\}$, learning rates $\{0.4, 0.3, 0.2, 0.1, 0.05, 0.01\}$ and batch sizes $\{1024, 2048, 4096\}$. 
We select hyperparameters on the validation set provided in ImageNet-C and report the performance on ImageNet-R and on the test set of ImageNet-C under the best validation hyperparameters.
We do not use any regularization techniques such as weight decay, gradient clipping, $tanh$ clipping or logits regularization.

\subsubsection{Robustness metrics and further results} \label{sec:robust_app.results}
Let $f$ be a classifier that has not been trained on ImageNet-C.
For each corruption type $c$ and level of severity $1\leq s \leq 5$, denote the top-1 error of this classifier as $E^{f}_{s,c}$.
Different corruption types pose different levels of difficulty. 
To make error rates across corruption types more comparable, the error rates are divided by AlexNet's errors. 
This standardized measure is the Corruption Error and is computed as 
\begin{equation*}
    CE^{f}_{c} = \left(\sum_{s=1}^{5}  E_{s,c}^{f} \right) /\left(\sum_{s=1}^{5} E_{s,c}^{AlexNet}\right)
\end{equation*}
The average error across all $15$ corruption types is called the mean Corruption Error (mCE).
Corruption Errors and mCE measure absolute robustness.

To better assess robustness, we also report the relative Corruption Error which measures relative robustness, i.e. loss in performance under corruptions. 
Denote by $E^{f}_{\text{clean}}$ the top-1 error rate for $f$ on the clean test set of ImageNet. 
The relative Corruption Error is given as
\begin{equation*}
     rCE^{f}_{c} = \sum_{s=1}^{5} \left(E_{s,c}^{f} - E^{f}_{clean}\right) / \sum_{s=1}^{5} \left(E_{s,c}^{AlexNet} - E_{clean}^{AlexNet}\right)
\end{equation*}
The mean relative Corruption Error (mrCE) is the mean of the relative Corruption Errors across all the corruption types.
For more details and intuitions about there measures please refer to \citep{hendrycks2019robustness}.

In Table \ref{table.imagenet_c_rest}, we report Corruption Errors for Blur, Weather, and Digital corruption types.
In Table \ref{table.imagenet_c_relative}, we report the relative robustness. 
As per \citep{hendrycks2019robustness}, we used the following values as the average AlexNet errors across severities, i.e. $\frac{1}{5} \sum_{s=1}^{5} E_{s, c}^{AlexNet}$, to normalize the Corruption Error values -- 
Gaussian Noise 88.6\%, Shot Noise 89.4\%, Impulse Noise 92.3\%, Defocus Blur
82.0\%, Glass Blur 82.6\%, Motion Blur 78.6\%, Zoom Blur 79.8\%, Snow 86.7\%, Frost
82.7\%, Fog 81.9\%, Brightness 56.5\%, Contrast 85.3\%, Elastic Transformation 64.6\%,
Pixelate 71.8\%, JPEG 60.7\%, Speckle Noise 84.5\%, Gaussian Blur 78.7\%, Spatter 71.8\%,
Saturate 65.8\%.

\begin{table}[ht]
\caption{Mean Corruption Error (mCE) and Corruption Error values for Blur, Weather, and Digital corruption types on ImageNet-C. 
All models are trained only using clean ImageNet images.}
\label{table.imagenet_c_rest}
\begin{center}
\resizebox{\columnwidth}{!}{%
\begin{tabular}{lc|cccc|cccc|cccc}
\hline
&  &\multicolumn{4}{c}{\bf Blur} &\multicolumn{4}{c}{\bf Weather} & \multicolumn{4}{c}{\bf Digital}\\
\hline
Method & mCE &  Defocus & Glass & Motion & Zoom & Snow & Frost & Fog & Bright & Contrast & Elastic & Pixel & JPEG \\
\hline
\quad Supervised & 76.7 & 75 &{\bf 89} & {\bf 78} & {\bf 80} & 78 & 75 & 66 & 57 & 71 & {\bf 85} & 77 & 77 \\
\emph{Using ResNet-50:} &&&&&&&&&&&&&\\
\quad SimCLR & 87.5 & 94.8& 103.3& 101.8& 101.9& 83.7& 80.6& 65.6& 71.5& 54& 106.8& 105.2& 93 \\
\quad ReLIC (ours) & 76.4 & 81.4& 96.9& 92.7& 93.2& 73.7& 71.2& 54.5& 60.2& {\bf 46.9}& 97.4& 85.5& 77.2\\
\emph{ResNet-50 with target network:} &&&&&&&&&&&&\\
\quad BYOL & 72.3 & 75& 93.6 & 86.3& 87.9& 74.3& 69.1& 48.5& 55& 48.6& 90.4& {\bf 74.3}& 73\\
\quad ReLIC (ours) & {\bf 70.8} & {\bf 73.2}& 94& 81.9 & 87 & {\bf 73.2}& {\bf 68}& {\bf 47.5} & {\bf 54.2} & 48.4& 89.5& 75.6& {\bf 71.8} \\
\hline
\end{tabular}
}
\end{center}
\end{table}

\begin{table}[h]
\caption{Mean relative Corruption Error (mrCE) and relative Corruption Error values for different corruptions and methods
on ImageNet-C. The mrCE value is the mean relative Corruption Error of the corruptions in Noise, Blur,
Weather, and Digital columns. All models are trained only using clean ImageNet images. \relic-t denotes using \relic{} with a ResNet-50+target network architecture as in BYOL \citep{grill2020bootstrap}.}
\label{table.imagenet_c_relative}
\begin{center}
\resizebox{\columnwidth}{!}{%
\begin{tabular}{lc|ccc|cccc|cccc|cccc}
\hline
&&\multicolumn{3}{c}{\bf Noise}  &\multicolumn{4}{c}{\bf Blur} &\multicolumn{4}{c}{\bf Weather} & \multicolumn{4}{c}{\bf Digital}\\
\hline
Method & mrCE & Gauss. & Shot &Impulse & Defocus & Glass & Motion & Zoom & Snow & Frost & Fog & Bright & Contrast & Elastic & Pixel & JPEG \\
\hline
\quad Supervised &105 & 104 &107 &107 &97 &{\bf 126} & {\bf 107} & {\bf 110} &101 &97 &79 &62 &89 &{\bf 146} &111 &132 \\
\emph{Using ResNet-50:} &&&&&&&&&&&&\\
\quad SimCLR & 111.9 & 88 & 92.7& 106.6& 122.2& 139.6& 140.5& 139.4& 96.9& 91.6& 59.9& 74.8& 36.7& 181.5& 158.3& 149.8 \\
\quad ReLIC (ours) & {\bf 87.7} & {\bf 67.3}& 73.1& 84.8& 96.1& 128.7& 123& 123.1& {\bf79.3}& {\bf 74.4}& 38.9& {\bf 33.4}& {\bf 24.6}& 157.5& 112& {\bf 99.8}\\
\emph{ResNet-50 with target network:} &&&&&&&&&&&&&&&\\
\quad BYOL & 90 & 72.5& 77.2& 86.7& 93& 132& 120& 122.5& 89.7& 80.2& 36.6& 41.5& 37.8& 155& {\bf 97.6}& 108.4 \\
\quad ReLIC (ours) & 88.4 & 69.1& {\bf 73}& {\bf79.2}& {\bf 90.4}& 134.2& 111.6& 121.9& 88.5& 79.2& {\bf 35.6}& 41.7& 38.5& 154.5 & 102.7& 106.8 \\
\hline
\end{tabular}
}
\end{center}
\end{table}

\subsection{Evaluation on Atari} \label{sec:atari_app}
For our experiments on Atari, we use the agent from R2D2 \citep{kapturowski2019} with standard hyperparameters noted below. We train each agent on approximately 15 billion frames and add a second encoder with the same architecture used in the Q-Network of the original agent. This second encoder is trained with a separate optimizer with only a representation learning objective. The agent then takes the output of this encoder as a given input. We use standard augmentations used in prior work \citep{kostrikov2020image} where we pad the frames on all sides with 4 pixels copied from the borders and then randomly cropping 84 windows. We randomly shift pixel intensity according to the distribution $s = 1.0 + 0.1 * \mathcal{N}'$ where $\mathcal{N}'$ is the standard Normal distribution with values clipped between -2 and 2. $s$ is then multiplied by the original image to return the augmented image.

\paragraph{\relic{} and SimCLR}
For our implementation of \relic{} and SimCLR, we do not use a critic embedding at all and utilize the last layer of the encoder for the objective. As in CURL \citep{srinivas2020curl} we utilize a target encoder for the second augmentation where we update the weights with a momentum of $.99$. We also clipped the gradients of our optimizer using a global norm ratio of 40. We report the hyperparameters in Table \ref{relic_atari}.

\begin{table}[ht]
  \caption{\relic{} and SimCLR Details}
  \centering
   \vspace{0.2cm}
  \begin{tabular}{cc}
    \toprule
    Parameter     & Value \\
    \midrule
    Normalize Inputs & True \\
    Temperature & 1.0 Constant \\
    Scaling of Embeddings & False \\
    Optimizer & Adam \\
    Learning Rate & 5e-4 \\
    Epsilon & 0.01 \\
    Beta 1 & 0.9 \\
    Beta 2 & 0.999 \\
    \bottomrule
    \label{relic_atari}
  \end{tabular}
\end{table}

\paragraph{CURL}
For CURL, we use a second encoder as noted before. With the exception of the encoder architecture and the optimizer parameters, all hyperparameters are the same as in \citep{srinivas2020curl} including the momentum value for the target network weight updates. We utilize the same architecture in the paper with a linear layer as a critic embedding for the target encoder.

\begin{table}[ht]
  \caption{CURL Details}
  \vspace{0.2cm}
  \centering
  \begin{tabular}{cc}
    \toprule
    Parameter     & Value \\
    \midrule
    Optimizer & Adam \\
    Learning Rate & 1e-3 \\
    Epsilon & 0.01 \\
    Beta 1 & 0.9 \\
    Beta 2 & 0.999 \\
    \bottomrule
    \label{curl_atari}
  \end{tabular}
\end{table}

\paragraph{BYOL}
In BYOL, we utilize two-layer perceptron networks as our predictor and projection layers. For both networks, the number of hidden units in the two layers was 1024 and 512. We use a target network update momentum of .99. The optimizer parameters are the same as in Table  \ref{relic_atari}.

\paragraph{Direct Augmentation}
We also compared against direct augmentation of the observations in the replay buffer as in DrQ \citep{kostrikov2020image}. We keep the architecture the same in this instance and use two duplicate encoders as input to the agent. In this case, the optimizer can jointly update both encoders and train them end-to-end.

\begin{table}[ht]
\tiny
  \caption{Individual Mean Episode Return on Atari.}
  \centering
   \vspace{0.2cm}
\begin{tabular}{|c|c|c|c|c|c|c|c|}
\hline
 Games & Average Human & Random & \relic{} (ours) & SimCLR & CURL & BYOL & Augmentation \\
\hline
 alien & 7127.70 & 227.80 & 8766.57 & \bf{10082.54} & 8506.48 & 9671.89 & 5201.93 \\
 amidar & 1719.50 & 5.80 & \bf{28449.26} & 28141.18 & 27213.75 & 25965.05 & 867.66 \\
 assault & 742.00 & 222.40 & \bf{92963.07} & 36109.84 & 7139.67 & 13565.20 & 1539.71 \\
 asterix & 8503.30 & 210.00 & \bf{998426.72} & 997305.51 & 661431.39 & 986307.92 & 26239.64 \\
 asteroids & 47388.70 & 719.10 & 83669.38 & 7299.90 & 76612.17 & 55936.02 & \bf{101340.17} \\
 atlantis & 29028.10 & 12850.00 & 1575940.94 & 1584392.76 & \bf{1584698.01} & 1530122.45 & 794011.79 \\
 bank heist & 753.10 & 14.20 & 1521.38 & 2467.62 & \bf{4095.29} & 1659.94 & 771.60 \\
 battle zone & 37187.50 & 2360.00 & \bf{452831.48} & 278903.14 & 287792.06 & 338695.47 & 31511.75 \\
 beam rider & 16926.50 & 363.90 & \bf{136695.24} & 98551.42 & 116794.58 & 87454.20 & 46894.14 \\
 berzerk & 2630.40 & 123.70 & \bf{146213.60} & 1301.36 & 73754.38 & 1265.21 & 73645.52 \\
 bowling & 160.70 & 23.10 & 205.09 & 193.50 & \bf{230.31} & 172.21 & 164.68 \\
 boxing & 12.10 & 0.10 & 100.00 & 100.00 & 100.00 & 100.00 & \bf{100.00} \\
 breakout & 30.50 & 1.70 & 405.05 & 404.06 & 407.14 & \bf{409.48} & 150.67 \\
 centipede & 12017.00 & 2090.90 & \bf{220886.86} & 99544.92 & 167779.11 & 146735.67 & 20152.01 \\
 chopper command & 7387.80 & 811.00 & 999900.00 & 999900.00 & \bf{999900.00} & 962003.61 & 5399.56 \\
 crazy climber & 35829.40 & 10780.50 & 272179.68 & 266870.81 & \bf{301689.62} & 210477.39 & 96538.00 \\
 defender & 18688.90 & 2874.50 & \bf{576405.57} & 522617.05 & 560816.84 & 493410.36 & 78750.19 \\
 demon attack & 1971.00 & 152.10 & 143774.79 & \bf{143786.19} & 143737.36 & 143574.86 & 821.98 \\
 double dunk & -16.40 & -18.60 & \bf{24.00} & 24.00 & 24.00 & 24.00 & 14.82 \\
 enduro & 860.50 & 0.00 & 2371.27 & 2366.19 & \bf{2373.12} & 2368.00 & 1361.66 \\
 fishing derby & -38.70 & -91.70 & 68.17 & \bf{83.00} & 72.21 & 70.11 & 19.93 \\
 freeway & 29.60 & 0.00 & 33.00 & 32.93 & \bf{33.04} & 33.00 & 32.00 \\
 frostbite & 4334.70 & 65.20 & 10156.41 & \bf{11171.49} & 3693.20 & 5793.80 & 5708.35 \\
 gopher & 2412.50 & 257.60 & \bf{123170.74} & 122368.21 & 122371.64 & 120317.04 & 43711.82 \\
 gravitar & 3351.40 & 173.00 & 4186.09 & 3601.14 & \bf{4997.87} & 4048.25 & 2014.59 \\
 hero & \bf{30826.40} & 1027.00 & 13615.35 & 13523.98 & 13620.78 & 13558.04 & 8957.00 \\
 ice hockey & 0.90 & -11.20 & 56.39 & 48.27 & 45.06 & \bf{59.70} & -2.43 \\
 jamesbond & 302.80 & 29.00 & \bf{15632.87} & 5714.62 & 10052.04 & 10099.81 & 1441.95 \\
 kangaroo & 3035.00 & 52.00 & 14342.59 & 14215.11 & 11674.19 & \bf{14471.65} & 7249.73 \\
 krull & 2665.50 & 1598.00 & \bf{137099.65} & 100426.69 & 86049.99 & 80414.04 & 16626.09 \\
 kung fu master & 22736.30 & 258.50 & \bf{230241.57} & 220076.57 & 228943.94 & 208064.38 & 64632.42 \\
 montezuma revenge & \bf{4753.30} & 0.00 & 1066.67 & 733.33 & 1072.30 & 419.54 & 26.67 \\
 ms pacman & 6951.60 & 307.30 & 13367.55 & 12053.76 & \bf{13465.80} & 12726.79 & 3238.90 \\
 name this game & 8049.00 & 2292.30 & \bf{48669.30} & 46657.55 & 47417.82 & 44848.29 & 13416.57 \\
 phoenix & 7242.60 & 761.40 & \bf{803108.37} & 253542.40 & 580969.56 & 20317.80 & 6264.39 \\
 pitfall & \bf{6463.70} & -229.40 & 0.00 & 0.00 & 0.00 & 0.00 & 0.00 \\
 pong & 14.60 & -20.70 & \bf{21.00} & 21.00 & 21.00 & 21.00 & 21.00 \\
 private eye & \bf{69571.30} & 24.90 & 10154.93 & 5115.34 & 5190.28 & 470.68 & 111.77 \\
 qbert & 13455.00 & 163.90 & \bf{353197.13} & 24340.75 & 208207.97 & 57261.24 & 11051.97 \\
 riverraid & 17118.00 & 1338.50 & \bf{23525.44} & 20400.83 & 20230.02 & 22206.57 & 10487.59 \\
 road runner & 7845.00 & 11.50 & 213173.15 & 236235.30 & 241917.98 & 238880.54 & \bf{440430.17} \\
 robotank & 11.90 & 2.20 & 97.65 & 82.60 & \bf{98.13} & 62.54 & 49.98 \\
 seaquest & 42054.70 & 68.40 & \bf{999999.00} & 999999.00 & 666700.67 & 29160.93 & 37397.26 \\
 skiing & \bf{-4336.90} & -17098.10 & -24761.06 & -23076.73 & -15497.66 & -26028.08 & -22162.91 \\
 solaris & \bf{12326.70} & 1236.30 & 4594.37 & 4571.27 & 4276.39 & 4331.03 & 4142.69 \\
 space invaders & 1668.70 & 148.00 & \bf{3625.52} & 3619.94 & 3542.48 & 3613.93 & 835.37 \\
 star gunner & 10250.00 & 664.00 & 283499.72 & \bf{289099.89} & 129720.84 & 175486.67 & 43167.07 \\
 surround & 6.50 & -10.00 & \bf{10.00} & 9.96 & 1.60 & 9.56 & -0.64 \\
 tennis & -8.30 & -23.80 & 0.00 & 0.00 & 0.00 & 0.00 & \bf{0.12} \\
 time pilot & 5229.20 & 3568.00 & 309297.74 & 92888.66 & \bf{400326.69} & 48011.44 & 14198.37 \\
 tutankham & 167.60 & 11.40 & \bf{371.17} & 306.45 & 337.61 & 285.36 & 144.30 \\
 up n down & 11693.20 & 533.40 & \bf{577256.03} & 520666.59 & 566912.89 & 552110.67 & 143512.38 \\
 venture & 1187.50 & 0.00 & 1929.53 & \bf{1945.20} & 1906.84 & 1881.76 & 733.29 \\
 video pinball & 17667.90 & 0.00 & 978292.52 & \bf{993332.08} & 932523.58 & 623223.24 & 37584.71 \\
 wizard of wor & 4756.50 & 563.50 & \bf{123513.74} & 89462.62 & 106801.20 & 68256.44 & 5940.82 \\
 yars revenge & 54576.90 & 3092.90 & 228704.52 & 99636.25 & \bf{229221.52} & 86847.75 & 48041.63 \\
 zaxxon & 9173.30 & 32.50 & \bf{120830.77} & 57379.66 & 85906.74 & 48067.61 & 23688.22 \\
\hline
\end{tabular}
\end{table}
\normalsize

\end{document}